\documentclass[letterpaper]{article} 
\usepackage[preprint]{aaai2027}  
\usepackage[hyphens]{url}  
\usepackage{graphicx} 
\usepackage{natbib}  
\usepackage{caption} 
\usepackage{algorithm}
\usepackage{algorithmic}

\usepackage{newfloat}
\usepackage{listings}
\DeclareCaptionStyle{ruled}{labelfont=normalfont,labelsep=colon,strut=off} 
\floatstyle{ruled}
\newfloat{listing}{tb}{lst}{}
\floatname{listing}{Listing}

\usepackage{booktabs}

\usepackage{amsmath}   
\usepackage{bm}        
\usepackage{amssymb}
\newcommand{\bikan}{BiKAN}
\newcommand{\ms}[2]{#1{\tiny$\pm$#2}}
\newtheorem{proposition}{Proposition}

\title{BiKAN: Restoring Collapsed Basis of Binary Kolmogorov--Arnold Networks}
\author{
    Kazi Ahmed Asif Fuad\textsuperscript{\rm 1},
    Lizhong Chen\textsuperscript{\rm 1}
}

\affiliations{
    \textsuperscript{\rm 1}Department of EECS\\
    Oregon State University\\
    Corvallis, OR 97331\\
    fuadk@oregonstate.edu, chenliz@oregonstate.edu
}

\begin{document}

\maketitle

\begin{abstract}
Binarizing a polynomial Kolmogorov--Arnold Network (KAN) not only changes parameter precision, but also alters the function space available to each layer. When activations are restricted to ${-1,+1}$, all even powers reduce to $1$ and all odd powers reduce to $x$, causing the elementwise polynomial basis to collapse to constant and first-order responses. We refer to this structural failure as \emph{Spatial Orthogonality Collapse}. Our proposed \bikan{} addresses this critical issue by augmenting each binary KAN layer with selected degree-2 Walsh characters. Fixed circular channel rolls generate pairwise parities, and learned binary projections mix them using the same XNOR--popcount operations as the remaining W1A1 paths. This restores explicit pairwise coordinates without learned routing or multiplier-based feature generation. Experiments on CIFAR-10 confirms that removing parity reduces accuracy by $1.23$ points over five paired seeds ($p=0.003$), the gain increases as width decreases, and accuracy improves monotonically as more parity planes are added. At an equal $\sim$11.9M-parameter budget, parity outperforms conventional widening by $3.09$ points ($p<10^{-4}$). At W1A1, \bikan{} reaches $99.48\%$, $84.38\%$, and $55.81\%$ on MNIST, CIFAR-10, and CIFAR-100, respectively. Post-route Zynq-7020 FPGA results show that the repair remains hardware-efficient; the convolutional design cuts DSP usage from 164 to 72 and estimated compute-core latency from 401 to 54.8 ms, while the power-of-two-aware dense design achieves zero-DSP inference with a 0.03-point accuracy loss. The BiKAN implementation is available at
\url{https://github.com/OSU-STARLAB/BiKAN}.
\end{abstract}



\section{Introduction}
\label{sec:introduction}

Kolmogorov--Arnold Networks (KANs) replace scalar weights with learnable univariate functions on edges, making the basis itself part of the model representation~\citep{liu2024kan}. Binary neural networks pursue the opposite objective: with 1-bit weights and activations (W1A1), dot products become XNOR--popcount operations, sharply reducing arithmetic and storage cost~\citep{rastegari2016xnor}. Combining KAN and binary is attractive for edge deployment, but exposes a KAN-specific failure. Once an activation is restricted to $x\in\{-1,+1\}$,
\begin{equation}
x^{2k}=1,\qquad x^{2k+1}=x ,
\label{eq:intro_collapse}
\end{equation}
so an elementwise polynomial dictionary of any nominal degree collapses to constant and first-order coordinates. A polynomial KAN layer therefore loses the higher-order basis structure it was designed to exploit. We call this \emph{Spatial Orthogonality Collapse}. Unlike ordinary quantization error, this failure is algebraic: no optimizer or surrogate gradient can make $x_i^2$ distinct from $1$ on the Boolean cube.

A generic remedy is to add network capacity. Widening is effective in conventional binary networks~\citep{mishra2018wrpn}, and sufficiently wide binary KANs also recover accuracy. Yet widening is indifferent to \emph{which} coordinates disappeared. In our CIFAR-10 study, a fully trained $4\times$ widened binary KAN uses 94.67M parameters to slightly exceed the 11.94M teacher-width \bikan{}, illustrating that brute-force capacity can work but at a very different model-size operating point. This motivates a more targeted question: if W1A1 changes the domain of the representation, can the missing capacity be restored using a basis native to that domain?

On the Boolean cube, every function admits a Walsh--Fourier expansion
$f(\mathbf{x})=\sum_{S}\widehat f(S)\chi_S(\mathbf{x})$ with
$\chi_S(\mathbf{x})=\prod_{i\in S}x_i$~\citep{odonnell2014boolean}.
Constants and coordinates form degrees $0$ and $1$; the next level contains the degree-2 parities $x_i x_j$. These pairwise coordinates are absent from the collapsed affine span, yet on binary inputs each requires one XNOR. This observation leads to \bikan{}, which restores selected degree-2 Walsh coordinates explicitly. A set of fixed circular channel rolls forms parities $q_c q_{c-r}$, and a learned binary projection mixes the resulting features using the same XNOR--popcount primitive as the remaining W1A1 paths. The rolls are compile-time wiring and require no learned routing. The default construction restores only a structured circulant subset of the full quadratic Walsh basis; accordingly, our claim is not that parity replaces width or depth, but that it exposes useful interactions directly rather than synthesizing them indirectly through additional generic capacity.

The experiments bear out this basis-restoration account. On CIFAR-10, removing parity reduces accuracy by $1.23$ points over five paired seeds ($p=0.003$); the advantage grows to $2.87$ points at quarter width; and accuracy rises monotonically as more parity planes are exposed. Crucially, at an equal $\sim$11.9M-parameter budget, allocating capacity to the parity path outperforms conventional widening by $3.09$ points ($p<10^{-4}$), separating structured feature restoration from a parameter-count effect. At the final W1A1 operating point, \bikan{} reaches $99.48\%$, $84.38\%$, and $55.81\%$ on MNIST, CIFAR-10, and CIFAR-100, compared with $99.53\%$, $83.11\%$, and $53.24\%$ for the paired FP32 teachers, respectively.

The repair also preserves the hardware motivation for binarization. On a Xilinx Zynq-7020, the convolutional W1A1 implementation reduces post-place-and-route DSP usage from 164 to 72 and LUT usage from 25.2k to 9.1k while changing MNIST accuracy by only $-0.06$ points; under the same HLS scheduling model, compute-core latency falls from 401 to 54.8\,ms ($7.3\times$). A dense power-of-two-aware implementation further removes the remaining scale multipliers and reaches a strictly 0-DSP design at 97.61\%. These results show that the added Boolean coordinates do not undo the deployment advantages that motivate W1A1. Our contributions are:
\begin{itemize}
    \item We identify Spatial Orthogonality Collapse in W1A1 polynomial KANs and formalize why the basis collapses to constant and first-order responses.
    \item We introduce a structured Walsh-parity path that restores selected degree-2 coordinates with fixed wiring and XNOR, and validate the mechanism through paired ablation, dose--response, capacity-starvation, synthetic parity, and equal-parameter width controls.
    \item We show that the representation remains compatible with XNOR--popcount inference and demonstrate post-route FPGA resource and compute-kernel latency reductions, including a 0-DSP dense implementation.
\end{itemize}

\section{Related Work}
\label{sec:related_work}

\paragraph{KANs and efficient deployment.}
KANs were introduced with B-spline edge functions~\citep{liu2024kan}, while subsequent variants pursue cheaper or alternative bases, including EfficientKAN, RBF-based FastKAN, and convolutional polynomial KANs~\citep{blealtan2024efficientkan,li2024fastkan,bodner2024convkan,drokin2024kagn}. Recent work addresses deployment more directly. QuantKAN develops QAT/PTQ methods across KAN families~\citep{quantkan}; KANtize studies low-bit spline coefficients and lookup tables~\citep{kantize}; and KANEL\'E combines quantization, pruning, and LUT-based FPGA evaluation~\citep{kanele}. These methods seek to approximate or compress an existing KAN representation. Our starting point is different: at W1A1, distinct polynomial basis functions can become algebraically identical, so the representation itself changes.

BiKA is the closest binary hardware-oriented KAN work~\citep{bika}. It replaces nonlinear KAN functions with learnable binary thresholds and realizes a comparator-and-accumulator architecture. \bikan{} instead asks what basis information is lost when a polynomial KAN is binarized and restores selected missing coordinates explicitly. To our knowledge, prior KAN quantization or hardware work has not formulated W1A1 polynomial-basis collapse or connected its repair to degree-2 Walsh characters.

\paragraph{Binary networks and Boolean structure.}
XNOR-Net established XNOR--popcount inference for binary CNNs~\citep{rastegari2016xnor}; WRPN uses widening to recover capacity~\citep{mishra2018wrpn}, while Bi-Real Net, IR-Net, and ReActNet improve optimization and information flow through shortcuts, surrogate gradients, and learnable activation reshaping~\citep{liu2018bireal,qin2020irnet,liu2020reactnet}. \bikan{} adopts training machinery but targets a different failure: binarization collapses distinct KAN basis functions rather than reducing numerical precision.

Walsh--Hadamard transforms have also appeared as efficient structured linear maps in neural networks~\citep{le2013fastfood,sindhwani2015structured}. In \bikan{}, however, the Hadamard path remains a linear remapping and does not itself expose $x_i x_j$ as a direct feature; the proposed parity path constructs that degree-2 Walsh character explicitly. This distinction is also hardware-aligned: parity maps to fixed wiring plus XNOR, complementing FPGA frameworks such as FINN and Boolean-logic synthesis approaches such as LogicNets~\citep{umuroglu2017finn,logicnets}.


\section{What Binarization Destroys in a KAN}
\label{sec:analysis}

\paragraph{Elementwise basis collapse.}
Let $\mathbf{x}\in\{-1,+1\}^{n}$ and consider a KAN layer whose edge
functions are univariate polynomials, including Gram- or Chebyshev-based
parameterizations. On the Boolean domain,
\begin{equation}
    x_i^{2k}=1,\qquad x_i^{2k+1}=x_i,
    \label{eq:collapse}
\end{equation}
so every polynomial of one binary coordinate reduces to $a_i+b_i x_i$.
Because a KAN layer adds these univariate edge responses, its elementwise
polynomial basis collapses, before the next nonlinearity, to
$\operatorname{span}\{1,x_1,\ldots,x_n\}$. W1A1 therefore removes the
higher-order coordinates of the original basis; the loss is structural, not
merely quantization noise.

\paragraph{The Walsh--Fourier view.}
Every $f:\{-1,+1\}^{n}\!\to\!\mathbb{R}$ has the unique expansion
\begin{equation}
    f(\mathbf{x})
    =\sum_{S\subseteq[n]} \widehat f(S)\,\chi_S(\mathbf{x}),
    \qquad
    \chi_S(\mathbf{x})=\prod_{i\in S}x_i,
\end{equation}
where the Walsh characters are orthonormal under the uniform
measure~\citep{odonnell2014boolean}. Degree-2 characters are exactly
$\chi_{\{i,j\}}(\mathbf{x})=x_i x_j$. A Hadamard projection is linear and
therefore exposes no explicit degree-2 coordinate; a later sign may induce
higher-order Fourier content only indirectly. Width and depth can likewise
synthesize such interactions through threshold composition, whereas on the
Boolean cube $x_i x_j$ is available directly as one parity, i.e., one XNOR
under bit encoding.

\begin{proposition}
\label{prop:parity}
For $n\geq2$, let
\begin{equation}
    \Phi_2(\mathbf{x})
    =\bigl[1,\;x_1,\ldots,x_n,\;\{x_i x_j\}_{1\leq i<j\leq n}\bigr].
\end{equation}
\textnormal{(i)} Linear functions of $\Phi_2$ are exactly the pseudo-Boolean
functions of Walsh degree at most $2$. \textnormal{(ii)} For $i\neq j$,
$x_i x_j$ is not affine in $\mathbf{x}$, and no single linear-threshold unit
$\operatorname{sign}(\mathbf{a}^{\top}\mathbf{x}+b)$ computes it on the
Boolean cube.
\end{proposition}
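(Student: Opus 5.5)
For part (i), I will rely on the uniqueness of the Walsh--Fourier expansion quoted above: the $2^n$ characters $\chi_S$ form an orthonormal basis of the real functions on $\{-1,+1\}^n$. The coordinates of $\Phi_2$ are exactly the characters with $|S|\le 2$, namely $\chi_\emptyset=1$, $\chi_{\{i\}}=x_i$ and $\chi_{\{i,j\}}=x_ix_j$. I will prove the two inclusions separately.

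For the first inclusion, any linear combination $c_0+\sum_i c_ix_i+\sum_{i<j}c_{ij}x_ix_j$ is already a Walsh expansion. By uniqueness, its coefficients are its Fourier coefficients, so $\widehat f(S)=0$ whenever $|S|>2$, and its degree is at most $2$.

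For the second inclusion, suppose $f$ has Walsh degree at most $2$. Then its expansion uses only characters with $|S|\le 2$, so $f=\langle w,\Phi_2\rangle$ with $w$ given by its Fourier coefficients. Orthonormality also shows that the entries of $\Phi_2$ are linearly independent, so this representation is unique.

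For part (ii), non-affineness follows from the same uniqueness. An affine function $a^\top x+b$ has Fourier support only on sets with $|S|\le 1$. The function $x_ix_j$ has $\widehat f(\{i,j\})=1\neq 0$, so the two cannot coincide. Alternatively, one can evaluate at the four assignments of $(x_i,x_j)$ with the other coordinates fixed. An affine $g$ satisfies $g(1,1)-g(1,-1)-g(-1,1)+g(-1,-1)=0$, whereas $x_ix_j$ gives $4$.

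The threshold claim is the classical XOR argument. Suppose $\operatorname{sign}(a^\top x+b)=x_ix_j$ on the cube. Fix all coordinates other than $i,j$, say at $+1$, and absorb them into $b'$. Set $g(u,v)=a_iu+a_jv+b'$. The requirement is $g(1,1)>0$, $g(-1,-1)>0$, $g(1,-1)<0$ and $g(-1,1)<0$. I will also fix the convention that the sign of $0$ counts as a failure, or else handle it by strict inequalities, since the value must be $\pm1$. Summing the first two gives $2b'>0$, while summing the last two gives $2b'<0$, which is a contradiction.

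The only delicate point I foresee is the convention for $\operatorname{sign}(0)$. If instead $\operatorname{sign}(0)=+1$, the inequalities become $\ge 0$ for the positive cases and $<0$ for the negative ones. The same sum still yields $b'\ge 0$ and $b'<0$, so the contradiction survives either way. Everything else follows directly from orthonormality and uniqueness of the Walsh expansion, which the paper cites from \citet{odonnell2014boolean}.
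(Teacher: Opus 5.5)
Your proposal is correct and follows the same route as the paper's proof in Appendix~A: both parts rest on orthonormality and uniqueness of the Walsh expansion, and the threshold claim is reduced to the two-bit XOR pattern on the $(x_i,x_j)$ face with the other coordinates fixed. The only difference is that you prove non-separability directly, by summing the inequalities to get $2b'>0$ and $2b'<0$ and also handling the $\operatorname{sign}(0)=+1$ convention the paper adopts, whereas the paper cites Minsky--Papert and additionally records the quantitative bound $\|g-\chi_{\{i,j\}}\|_2^2=\|g\|_2^2+1\geq 1$.
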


\begin{quote}
\emph{Proof sketch.}
The entries of $\Phi_2$ are precisely the Walsh characters with
$|S|\leq2$, so orthogonality proves (i). For (ii), $x_i x_j$ is orthogonal
to all degree-$0/1$ characters and hence cannot be affine; after a sign, a
single affine unit is a linear threshold function, while two-bit parity is
not linearly separable~\citep{minsky1969perceptrons}. Full proof in
Appendix~A.
\end{quote}

The proposition concerns the \emph{all-pairs} degree-2 map and a single
layer before its output nonlinearity. The deployed \bikan{} instead uses the
channel pairs induced by a short circular-offset set $\mathcal R$, and therefore
spans a structured circulant slice rather than all $\binom{n}{2}$ characters.
Deeper binary networks remain able to compose parity-like functions across
layers. The distinction is visible synthetically: for
pairs covered by $\mathcal R=\{1,3\}$, a parity-feature linear model reaches
$100.00\pm0.00\%$ with 193 parameters, while the largest tested two-hidden-layer
sign MLP (1.12M parameters) reaches $86.95\pm1.94\%$; on unseen offsets the
same parity map falls to $50.23\pm0.33\%$. Thus our claim is efficiency, not
impossibility: explicitly exposing selected degree-2 characters provides
interactions absent from an affine binary layer, while deeper binary networks
may synthesize them indirectly.

\section{\bikan{}}
\label{sec:method}

These observations define the design target for \bikan{}: restore useful
degree-2 coordinates explicitly without materializing the full quadratic basis
or sacrificing binary efficiency. Accordingly, \bikan{} augments each layer
with a structured subset of Walsh-parity features while retaining the teacher's
hidden channel widths. Figure~\ref{fig:method} summarizes the resulting
architecture and its training-to-deployment path.

\begin{figure*}[!ht]
    \centering
    \includegraphics[width=\textwidth]{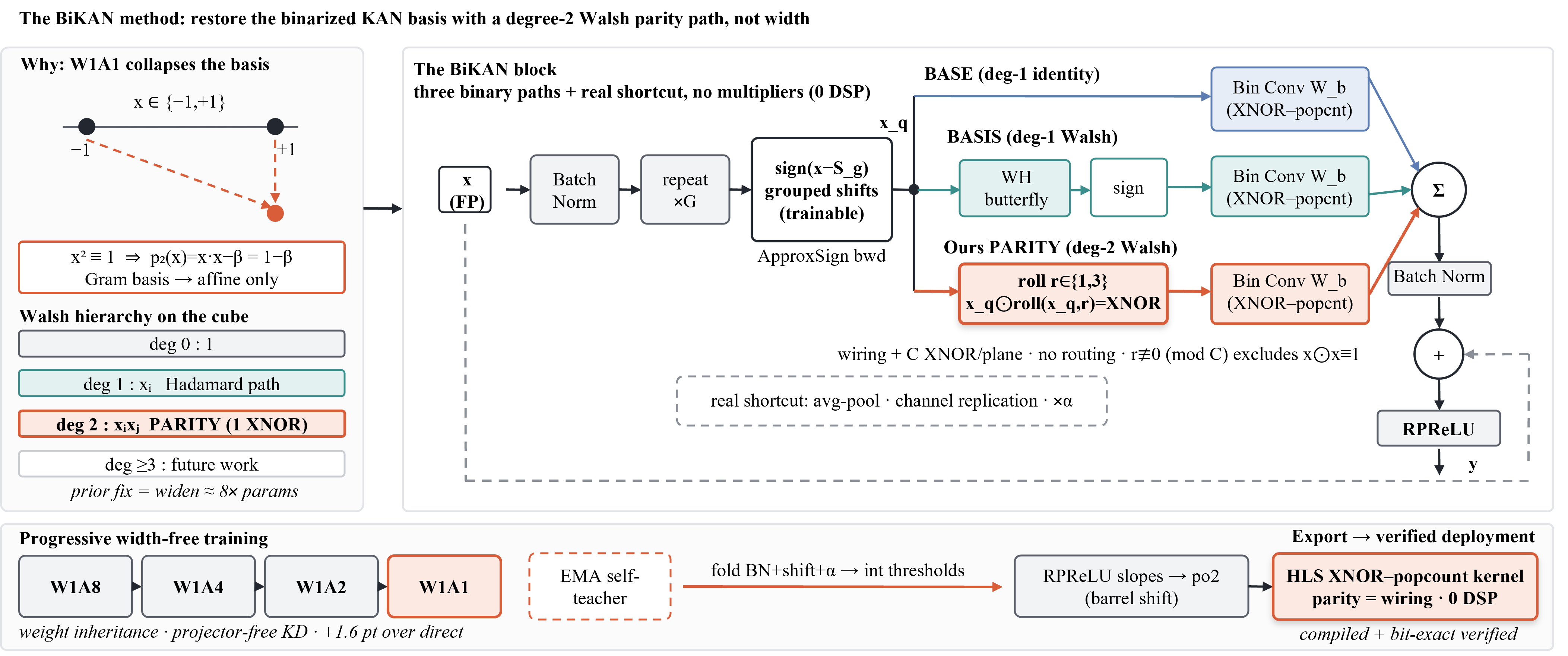}
    \caption{\textbf{The \bikan{} method.}
    W1A1 collapses an elementwise polynomial basis to constant and first-order
    responses. \bikan{} adds a structured degree-2 parity path to the binary
    base and fixed Walsh--Hadamard paths. Each circular roll forms
    $q_cq_{(c-r)\bmod C'}$, one XNOR per feature at W1A1. The three binary
    projections are accumulated before normalization, a real shortcut, and
    RPReLU. Training-only continuation, distillation, and surrogate-gradient
    components disappear at deployment.}
    \label{fig:method}
\end{figure*}

\subsection{The \bikan{} Block}
\label{sec:bikan_block}

For $\mathbf{x}\in\mathbb{R}^{C\times H\times W}$, we normalize and replicate
it into $G$ groups,
\begin{equation}
    \mathbf{x}_{\mathrm{rep}}
    =\operatorname{Rep}_{G}
    \!\left(\operatorname{BN}_{\mathrm{pre}}(\mathbf{x})\right),
    \qquad C'=GC,
    \label{eq:replicate}
\end{equation}
and quantize with trainable grouped shifts,
\begin{equation}
    \mathbf{q}
    =\operatorname{sign}(\mathbf{x}_{\mathrm{rep}}-\mathbf{S})
    \in\{-1,+1\}^{C'\times H\times W},
    \label{eq:binary_activation}
\end{equation}
using $\operatorname{sign}(0)=+1$. The shifts initialize to staggered
thresholds and are optimized jointly with the network.

The block evaluates three responses. The \emph{base path} applies a binary
convolution directly,
\begin{equation}
    \mathbf{y}_{\mathrm{base}}
    =\mathcal{F}(\mathbf{q};\widehat{\mathbf{W}}_{\mathrm{base}}),
\end{equation}
where $\mathcal{F}$ is convolution (or a dense map). The \emph{basis path}
applies a fixed block-diagonal Walsh--Hadamard transform, requantizes, and
uses a second binary projection,
\begin{align}
    \mathbf{q}_{H} &= Q_b(\mathcal{H}\mathbf{q}),\\
    \mathbf{y}_{H} &=
    \mathcal{F}(\mathbf{q}_{H};\widehat{\mathbf{W}}_{H}).
    \label{eq:hadamard_path}
\end{align}
This gives a thresholded linear remapping but no explicit pairwise feature.

\paragraph{Degree-2 parity path.}
For each circular offset $r\in\mathcal{R}$, we form
\begin{equation}
    \boldsymbol{\pi}_{r}(\mathbf{q})_c
    =\mathbf{q}_c\odot\mathbf{q}_{(c-r)\bmod C'},
    \qquad c=0,\ldots,C'-1,
    \label{eq:parity_feature}
\end{equation}
with $\odot$ applied spatially. At W1A1 this product is XNOR; the roll is
fixed wiring, so a plane needs $C'$ Boolean pair operations per spatial
location and no learned routing table. We exclude $r\equiv0\pmod{C'}$ because
$q_c^2\equiv1$, and use $\mathcal{R}=\{1,3\}$ by default. Concatenated planes
are processed by a learned binary projection,
\begin{equation}
    \mathbf{y}_{\mathrm{par}}
    =\mathcal{F}\!\left(
        \operatorname{Concat}_{r\in\mathcal{R}}
        \boldsymbol{\pi}_r(\mathbf{q});
        \widehat{\mathbf{W}}_{\mathrm{par}}
    \right).
    \label{eq:parity_path}
\end{equation}
Parity generation is parameter-free, but its projection is learned. Thus
\bikan{} is not a zero-parameter augmentation: it spends binary projection
weights on explicit higher-order coordinates while retaining the teacher's
hidden channel widths, rather than attempting to recover the same interactions
through a large width multiplier.

The paths are summed and normalized,
\begin{equation}
    \mathbf{z}
    =\operatorname{BN}_{\mathrm{post}}\!\left(
        \mathbf{y}_{\mathrm{base}}+\mathbf{y}_{H}+\mathbf{y}_{\mathrm{par}}
    \right).
    \label{eq:path_sum}
\end{equation}
A Bi-Real shortcut~\citep{liu2018bireal} carries the real-valued block input
around the binary transform; spatial reduction uses average pooling and
channel expansion uses replication. With per-channel scale
$\boldsymbol{\eta}$,
\begin{equation}
    \mathbf{z}_{r}
    =\mathbf{z}+\boldsymbol{\eta}\odot\mathcal{A}(\mathbf{x}).
    \label{eq:shortcut}
\end{equation}
Each block then applies RPReLU~\citep{liu2020reactnet},
\begin{equation}
    \operatorname{RPReLU}(z)=\operatorname{PReLU}(z-\gamma)+\zeta,
    \label{eq:rprelu}
\end{equation}
with per-channel shifts and slope; the classifier head omits this final
reshaping.

\subsection{Binarization and Optimization}
\label{sec:binarization}

\paragraph{Weights.}
We use Libra-style binarization with EDE~\citep{qin2020irnet}. Per output
channel,

\begin{equation}
\small
\widetilde{\mathbf{W}}
=\frac{\mathbf{W}-\mu_W}{\sigma_W+\epsilon},\;
\widehat{\mathbf{W}}
=\alpha_W\operatorname{sign}(\widetilde{\mathbf{W}}),\;
\alpha_W=\mathbb{E}[|\widetilde{\mathbf{W}}|].
\label{eq:weight_bin}
\end{equation}
Standardization leaves the forward sign pattern of mean-centering unchanged
but normalizes the latent scale used by $\alpha_W$ and the surrogate. EDE uses
\begin{equation}
    \frac{\partial \widehat W}{\partial W}
    \approx t\left[1-\tanh^2(t\widetilde W)\right],
    \label{eq:ede}
\end{equation}
with $t$ annealed toward a sharper sign approximation. Deployed weights remain
$\{-\alpha_W,+\alpha_W\}$.

\paragraph{Activations and grouped thresholds.}
The W1A1 backward pass uses the Bi-Real ApproxSign
surrogate~\citep{liu2018bireal},
\begin{equation}
    \frac{\partial q}{\partial u}
    \approx
    \begin{cases}
        2-2|u|, & |u|<1,\\
        0,      & \text{otherwise},
    \end{cases}
    \qquad u=x-S.
    \label{eq:approxsign}
\end{equation}
Gradients also update the grouped shifts. To discourage redundant thresholds,
we use the training-only hinge
\begin{equation}
    \mathcal{L}_{\mathrm{div}}
    =\sum_{\ell}\operatorname*{mean}_{c}
    \max\!\left(0,m-\operatorname{Std}_{g}S^{(\ell)}_{g,c}\right).
    \label{eq:shift_diversity}
\end{equation}

\subsection{Teacher-Guided Precision Descent}
\label{sec:training}

Because student and FP32 teacher share hidden widths, intermediate attention
maps align without learned projectors. We combine cross-entropy and
$T$-scaled logit distillation,
\begin{equation}
    \mathcal{L}_{\mathrm{KD}}
    =(1-\alpha)\mathcal{L}_{\mathrm{CE}}
    +\alpha T^2D_{\mathrm{KL}}\!\left(p_T^{(T)}\Vert p_S^{(T)}\right),
\end{equation}
and optimize
\begin{equation}
    \mathcal{L}
    =\mathcal{L}_{\mathrm{KD}}
    +\lambda_{\mathrm{AT}}\mathcal{L}_{\mathrm{AT}}
    +\lambda_{\mathrm{div}}\mathcal{L}_{\mathrm{div}}
    +\lambda_{\mathrm{EMA}}\mathcal{L}_{\mathrm{EMA}},
    \label{eq:training_objective}
\end{equation}
with scale-invariant attention transfer at the three intermediate feature taps
and the EMA self-teacher active only at W1A1. Because the feature dimensions
already match, attention transfer introduces no learned projectors. Latent
binary weights use zero weight decay and a separate learning-rate multiplier;
other parameters use the standard optimizer settings. BatchNorm statistics are
recalibrated from training data after optimization. Weights remain 1-bit while activation precision descends through W1A8$\rightarrow$W1A4$\rightarrow$W1A2$\rightarrow$W1A1, with latent-state inheritance. The multi-bit stages provide optimization continuation rather than additional capacity or deployment targets. Because the ladder also adds optimization steps, we compare it with direct W1A1 training matched for total compute instead of attributing the short-budget gap to the schedule itself.

\subsection{Hardware Mapping}
\label{sec:hardware_mapping}

For W1A1 operands, binary inner products reduce to XNOR--popcount. If $M$ of
$N$ bit pairs agree,
\begin{equation}
    \sum_{i=1}^{N}q_i b_i
    =2M-N,\qquad
    M=\operatorname{popcount}\!\left(
    \operatorname{XNOR}(\mathbf q,\mathbf b)\right).
    \label{eq:xnor_popcount}
\end{equation}

Thus all three learned projections share the same binary core; Hadamard uses
fixed add/subtract logic, while parity uses fixed channel rolls plus XNOR.
This core is multiplier-free.

End-to-end inference can nevertheless retain fixed-point scale arithmetic:
binary-weight scales, post-normalization, the real shortcut, and the classifier
may require materialized scaling rather than immediate threshold folding. We
therefore distinguish \emph{multiplier-free binary compute} from
\emph{zero-DSP end-to-end inference} and report the latter from synthesis.
Power-of-two scales replace eligible multiplies by shifts. For the dense GRAM
hardware variant, Po2-aware fine-tuning exposes rounded power-of-two weight
scales in the forward binarizer with a straight-through rounding gradient so
latent weights can adapt to the exported scales. In the convolutional backbone,
normalization and shortcut scales form additional scale families outside the
weight binarizer, so adapting $\alpha_W$ alone does not imply zero-DSP
end-to-end inference. EDE, ApproxSign, distillation, EMA, and diversity
regularization are training-only and leave no inference-time hardware
footprint.

\section{Experiments}
\label{sec:experiments}

\paragraph{Evaluation protocol.}
We evaluate \bikan{} using a pre-registered, multi-seed protocol that controls
for seed variance, checkpoint selection, width, and optimization budget.
Compared models share the same seed and FP32 teacher checkpoint. We select
checkpoints on a held-out validation split and report BN-recalibrated test
accuracy. Primary CIFAR-10 comparisons use five paired seeds; secondary studies
and CIFAR-100 use three. Results are reported as mean$\pm$standard deviation,
with two-sided paired $t$-tests for seed-matched comparisons. Primary runs are
colocated on the same accelerator to avoid cross-GPU effects. Earlier
single-seed tabular/MLP and cross-family studies use test-selected checkpoints
and are reported only as exploratory evidence. Full protocols, hypotheses,
per-seed results, and preliminary experiments appear in Appendices~B--J.

\subsection{W1A1 Accuracy}
\label{sec:main_results}
Table~\ref{tab:main} asks whether the representation restored by \bikan{}
remains effective at W1A1. On CIFAR-10, \bikan{} exceeds its paired FP32
teachers by $1.26$ points ($84.38\pm0.14\%$ vs.\
$83.11\pm0.34\%$, $p=6.6\times10^{-4}$). CIFAR-100 shows a
$2.57$-point improvement ($p=0.047$), although the variance and $n=3$
warrant caution. MNIST is saturated, with no significant difference
($p=0.26$). On Tiny-ImageNet, \bikan{} improves by $4.20$ points over two
seeds; because these runs use a final-checkpoint fallback without validation
selection, we treat them as supporting breadth rather than a significance
claim.

\begin{table}[!ht]
\centering
\caption{\textbf{Primary W1A1 classification results.}
BN-recalibrated test accuracy (\%) from paired, validation-selected
checkpoints.}
\label{tab:main}
\scriptsize
\setlength{\tabcolsep}{3.2pt}
\begin{tabular}{@{}lcccc@{}}
\toprule
 & MNIST & CIFAR-10 & CIFAR-100 & T-IN$^\dagger$ \\
\midrule
FP32 teacher
& $99.53{\pm}0.01$
& $83.11{\pm}0.34$
& $53.24{\pm}2.83$
& $32.95{\pm}1.80$ \\
\bikan{} W1A1
& $\mathbf{99.48{\pm}0.05}$
& $\mathbf{84.38{\pm}0.14}$
& $\mathbf{55.81{\pm}1.86}$
& $\mathbf{37.15{\pm}0.68}$ \\
$n$
& 3 & 5 & 3 & 2 \\
\midrule
$\Delta$ vs.\ teacher
& $-0.05$
& $\mathbf{+1.26}$
& $\mathbf{+2.57}$
& $+4.20$ \\
\bottomrule
\end{tabular}

\vspace{1mm}
{\scriptsize $^\dagger$Tiny-ImageNet uses final-checkpoint rather than
validation-based selection and is supporting evidence only.}
\end{table}

These results show that W1A1 preserves the accuracy of the underlying KAN.
We next test the central mechanism: whether explicit degree-2 Walsh
coordinates efficiently restore capacity lost through basis collapse.

\subsection{Does Restoring Parity Matter?}
\label{sec:mechanism_results}

\paragraph{A reproducible parity effect.}
Table~\ref{tab:ablation} isolates parity under the same direct-to-W1A1
training budget. Removing it reduces CIFAR-10 accuracy from
$83.00\pm0.31\%$ to $81.77\pm0.38\%$; all five paired seeds favor
\bikan{}, yielding a $+1.23\pm0.43$-point gain
($t(4)=6.49$, $p=0.003$).

\begin{table}[!ht]
\centering
\caption{\textbf{Mechanism ablation on CIFAR-10.}
All arms use the same direct-to-W1A1 budget.}
\label{tab:ablation}
\scriptsize
\setlength{\tabcolsep}{3.5pt}
\begin{tabular}{@{}lccc@{}}
\toprule
Configuration & Accuracy (\%) & $n$ & Params. \\
\midrule
\textbf{Full \bikan{}}
& $\mathbf{83.00{\pm}0.31}$ & 5 & 11.94M \\
w/o parity
& $81.77{\pm}0.38$ & 5 & 5.97M \\
w/o shortcut
& $83.08{\pm}0.28$ & 3 & 11.94M \\
w/o RPReLU
& $78.93{\pm}0.85$ & 3 & 11.94M \\
frozen STE
& $83.05{\pm}0.29$ & 3 & 11.94M \\
bare binary KAN
& $78.19{\pm}0.33$ & 5 & 5.97M \\
\bottomrule
\end{tabular}
\end{table}

Removing RPReLU costs about $4.1$ points, confirming its importance for
W1A1 optimization, although it is inherited from prior binary-network work.
Removing the shortcut or freezing the estimator has no measurable effect at
this budget. The contribution specific to \bikan{} is therefore the additional
$1.23$-point gain from explicit parity beyond standard binary-training
machinery.

\paragraph{Dose response.}
The basis-restoration hypothesis predicts that exposing more degree-2
coordinates should increase useful capacity. Figure~\ref{fig:dose} tests this
by varying only $|\mathcal{R}|$.

\begin{figure}[!ht]
\centering
\includegraphics[width=0.75\columnwidth]{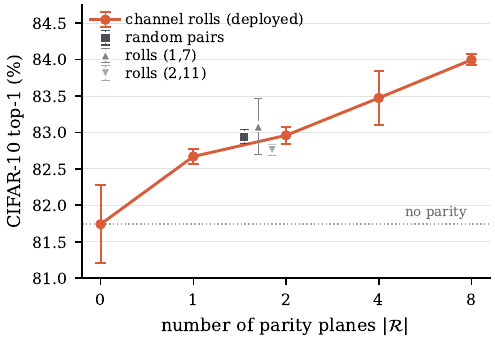}
\caption{\textbf{Parity dose response and pairing controls on CIFAR-10.}
Accuracy rises with parity-plane count
($n=5$ for $|\mathcal{R}|=2$, otherwise $n=3$).
Random and alternative circular pairings perform comparably to the deployed
hardware-friendly rolls. Exact values are in Appendix~D.}
\label{fig:dose}
\end{figure}

Accuracy rises monotonically from $81.74\%$ to $84.00\%$ as
$|\mathcal{R}|$ increases from $0$ to $8$, without saturating. Random pairing
and alternative offset sets remain close to the deployed two-plane result,
indicating no systematic penalty from the fixed circulant wiring.

\paragraph{Parity matters most when capacity is scarce.}
If parity restores per-neuron capacity, its benefit should grow as width
shrinks. Table~\ref{tab:capacity} confirms this prediction: the parity gap
increases from $+1.23$ points at full width to $+1.66$ at half width and
$+2.87$ at quarter width, with all comparisons significant.

\begin{table}[!ht]
\centering
\caption{\textbf{Capacity and width controls on CIFAR-10.}
The upper block measures parity under reduced width; the lower block compares
parity with conventional widening.}
\label{tab:capacity}
\scriptsize
\setlength{\tabcolsep}{3.0pt}
\begin{tabular}{@{}lccc@{}}
\toprule
Comparison & Difference & $n$ & $p$ \\
\midrule
Parity gap, $0.25\times$ width
& $+2.87{\pm}0.67$ & 3 & 0.018 \\
Parity gap, $0.50\times$ width
& $+1.66{\pm}0.40$ & 3 & 0.019 \\
Parity gap, $1.00\times$ width
& $+1.23{\pm}0.43$ & 5 & 0.003 \\
\midrule
Parity vs.\ width, equal params.
& $\mathbf{+3.09{\pm}0.31}$ & 5 & $<10^{-4}$ \\
Full $1\times$ vs.\ bare $4\times$
& $+0.33{\pm}0.49$ & 5 & 0.210 \\
\bottomrule
\end{tabular}
\end{table}

The equal-parameter control separates structured restoration from generic
capacity. With comparable budgets, the two-plane parity model reaches
$82.96\pm0.12\%$ at 11.94M parameters, versus $79.87\pm0.27\%$ for a
widened bare model at 11.77M, a $+3.09$-point advantage
($p<10^{-4}$). Thus explicit degree-2 coordinates are more effective than
ordinary width at this budget.

\paragraph{Width still helps.}
Parity improves efficiency rather than eliminating the value of width.
Under direct training, teacher-width \bikan{} and a 94.67M-parameter bare
$4\times$ model are statistically indistinguishable
($83.00\pm0.31\%$ vs.\ $82.68\pm0.21\%$, $p=0.21$). With the full
training recipe, the widened model reaches $84.76\pm0.02\%$ over two seeds,
about $0.4$ points above \bikan{}. The claim is therefore an improved
accuracy--capacity trade-off, not universal dominance over widening.

\subsection{From Proposition to Measurement}
\label{sec:synthetic_results}

A controlled Boolean task tests whether parity helps for the mechanism predicted
by the analysis. Inputs lie in $\{-1,+1\}^{64}$, and each target depends on
seven degree-2 interactions. The \emph{covered} task uses pair distances within
the orbit of the deployed $\mathcal{R}=\{1,3\}$ rolls; the
\emph{uncovered} task uses distances outside it.

\begin{table}[!ht]
\centering
\caption{\textbf{Synthetic test of the Walsh interpretation.}
Mean$\pm$std over three seeds. The parity map solves only interactions exposed
by its roll orbit.}
\label{tab:synthetic_main}
\scriptsize
\setlength{\tabcolsep}{2.5pt}
\begin{tabular}{@{}lrrr@{}}
\toprule
Model & Covered & Uncovered & Params. \\
\midrule
FP linear
& $49.9{\pm}1.5$ & $50.0{\pm}0.3$ & 65 \\
Parity map, $\mathcal{R}=\{1,3\}$
& $\mathbf{100.0{\pm}0.0}$ & $50.2{\pm}0.3$ & \textbf{193} \\
1-layer sign, width 1024
& $50.0{\pm}0.2$ & $49.9{\pm}1.5$ & 67.6K \\
2-layer sign, width 1024
& $87.0{\pm}1.9$ & $86.4{\pm}1.1$ & 1.12M \\
\bottomrule
\end{tabular}
\end{table}

The 193-parameter parity map reaches $100\%$ on covered interactions, while
the linear and one-hidden-layer sign models remain at chance. A deeper sign
network can synthesize the interactions indirectly, but the 1.12M-parameter
model reaches only $87.0\%$. Conversely, the parity map remains at chance
($50.2\%$) on uncovered pairs. Thus \bikan{} provides direct access to
selected degree-2 Walsh coordinates without spanning all pairwise characters
or precluding deeper composition. All pre-registered criteria are satisfied;
Appendix~D reports the full width sweep.

\subsection{Separating Architecture from Optimization}
\label{sec:optimization_results}

\paragraph{Controlling for optimization budget.}
Progressively reducing activation precision from
W1A8$\rightarrow$W1A4$\rightarrow$W1A2$\rightarrow$W1A1 produces a strong
binary endpoint: on CIFAR-10, the final model reaches
$84.38\pm0.14\%$, compared with $82.51\pm0.25\%$ for direct W1A1 training
given the same 40-epoch final-stage budget ($+1.87$ points,
$p<10^{-4}$). This comparison, however, gives the progressive trajectory
more total optimization steps. When direct W1A1 training is instead matched
to the same 160-epoch total compute, it reaches $84.43\pm0.29\%$; over the
matched seeds the difference is only $-0.08\pm0.41$ points ($p=0.77$).
We therefore do not attribute an independent accuracy gain to progressive
descent. Its practical role is as a continuation strategy that yields usable
intermediate-precision checkpoints---W1A8, W1A4, and W1A2---along the same
training trajectory before reaching the final W1A1 deployment point. The
complete precision trajectory and recipe controls are reported in
Appendix~F.



\subsection{Breadth Beyond Convolutional Benchmarks}
\label{sec:tabular_results}

The primary studies above isolate the mechanism under a modern paired protocol.
We next ask a narrower breadth question: when width and training budget are
held fixed, does a binary KAN representation retain an advantage over an
ordinary binary MLP outside the convolutional KAGN setting? Table~\ref{tab:tabular}
reports the available tabular and dense-MLP suite. These runs use one seed and
the earlier test-selected protocol, so they are exploratory and are not used
for significance claims.

\begin{table}[!ht]
\centering
\caption{\textbf{Exploratory tabular and dense-MLP results at W1A1.}
Classification entries are accuracy (\%, higher is better); Traffic reports
RMSE (lower is better). All rows use one seed under the earlier protocol.
Detailed architectures, parameter counts, and recalibration behavior are in
Appendix~I.}
\label{tab:tabular}
\scriptsize
\setlength{\tabcolsep}{2.4pt}
\begin{tabular}{@{}lrrrr@{}}
\toprule
Dataset & FP32 KAN & \bikan{} & Wide $4\times$ & Bin.\ MLP \\
\midrule
Wine
& 66.67 & 61.11 & 66.67 & 47.22 \\
Dry Bean
& 91.30 & 83.14 & 90.19 & 52.19 \\
JSC OpenML
& 76.77 & 71.67 & 71.76 & 53.43 \\
Tiny-ImageNet MLP
& 10.85 & 10.63 & 12.12 & 7.63 \\
Traffic, RMSE $\downarrow$
& 0.1001 & 0.1082 & 0.1063 & 0.1590 \\
\bottomrule
\end{tabular}
\end{table}

Across all five tasks, \bikan{} is better than the matched binary MLP:
the gains are $+13.89$, $+30.95$, $+18.24$, and $+3.00$ accuracy points
on the four classification datasets, while Traffic RMSE improves from
$0.1590$ to $0.1082$. This pattern suggests that the KAN representation still
earns its additional structure after binarization; the result is not confined
to a convolutional backbone. The comparison to widening is deliberately more
qualified. Teacher-width \bikan{} essentially ties the wide model on JSC
($71.67$ vs.\ $71.76$), but widening is better on Wine, Dry Bean,
Tiny-ImageNet MLP, and Traffic. Thus the suite supports a basis-versus-binary-MLP
advantage, not a universal width-free claim.

A small cross-family check points in the same direction: teacher-width W1A1
students reach 96.85\% versus 97.66\% for EfficientKAN and 96.79\% versus
97.63\% for PyKAN, each with 0.82M parameters. These are also single-seed
legacy results, and the attempted FastKAN run produced no valid checkpoint;
we therefore retain their full context in Appendix~J rather than treating them
as primary evidence.

\subsection{FPGA Realization}
\label{sec:hardware_results}

Finally, we ask whether restoring degree-2 coordinates gives back accuracy by
reintroducing expensive arithmetic. It does not: parity generation is fixed
channel wiring plus XNOR, and the learned projection uses the same
XNOR--popcount primitive as the remaining binary paths. Table~\ref{tab:hw}
reports post-place-and-route resources on a Xilinx Zynq-7020 for two \bikan{}
realizations that expose complementary hardware regimes: a dense GRAM model,
where almost all large inner products can be made binary, and the convolutional
KAGN backbone used in the main accuracy experiments.

\begin{table}[!ht]
\centering
\caption{\textbf{Post-place-and-route FPGA results on Zynq-7020.}
Dense GRAM latency is cycle-derived from C/RTL cosimulation; convolutional
latency is the common HLS \texttt{csynth-min} compute-core estimate and is not
an end-to-end board measurement. BRAM denotes 18-Kb blocks.}
\label{tab:hw}
\scriptsize
\setlength{\tabcolsep}{2pt}
\renewcommand{\arraystretch}{1.05}
\resizebox{\columnwidth}{!}{%
\begin{tabular}{@{}llrrrrrrc@{}}
\toprule
Backbone & Design & Acc.\,(\%) & DSP & LUT & FF & BRAM & F$_{\max}$ & Latency \\
\midrule
GRAM & FP32
& 97.64 & 104 & 14,417 & 13,447 & 30 & 100.7 & 59.6 ms \\
GRAM & W1A1
& 97.64 & 6 & 9,365 & 4,791 & 272 & 98.9 & \textbf{0.705 ms} \\
GRAM & W1A1 + Po2-QAT
& \textbf{97.61} & \textbf{0} & 9,700 & 5,170 & 272 & 98.9 & 0.705 ms \\
\midrule
KAGN conv & FP32
& 99.55 & 164 & 25,155 & 23,929 & 224 & 103.0 & 401 ms \\
KAGN conv & \bikan{} W1A1
& \textbf{99.49} & 72 & \textbf{9,144} & \textbf{11,482} & 215 & 110.0 & \textbf{54.8 ms} \\
KAGN conv & W1A1 + naive Po2
& 98.61 & \textbf{12} & 11,555 & 13,746 & \textbf{200} & \textbf{115.8} & 54.8 ms \\
\bottomrule
\end{tabular}%
}
\end{table}

The dense result shows the upper end of the binary-hardware opportunity. W1A1
preserves the 97.64\% FP32 accuracy while reducing DSP usage from 104 to 6
($94\%$), LUTs from 14.4k to 9.4k, and RTL-cosim latency from 59.6 to
0.705\,ms ($84\times$). Its larger BRAM count reflects a different storage
mapping---more of the binary weights are kept on chip---rather than a uniform
increase in hardware cost. Raw binary-weight storage is also favorable despite
the parity-expanded parameter count: the dense FP32 model is approximately
2.0\,MB, whereas 3.26M one-bit weights require about 0.41\,MB before scales
and metadata.

The convolutional implementation retains more non-binary operations around its
XNOR cores, but the reduction remains substantial: DSPs fall from 164 to 72,
LUTs from 25.2k to 9.1k, and FFs from 23.9k to 11.5k while accuracy changes by
only $-0.06$ points. Under the common HLS scheduling model, compute-core
latency falls from 401 to 54.8\,ms ($7.3\times$). The W1A1 RTL cosimulation
that completes is DDR-bound (450.9\,ms); we therefore use it only as
verification evidence and do not mix it with the csynth-based FP32 comparison.

\paragraph{Can the remaining multipliers be removed?}
The XNOR--popcount dot product itself uses zero DSPs; residual DSPs arise from
scales, normalization, shortcut, head, and indexing arithmetic. This makes
power-of-two scale mapping a direct stress test of the hardware story. Naively
rounding trained GRAM scales collapses accuracy to 55.46\%, showing that
multiplier removal is not a free post-processing step. When the same rounding
is placed inside a short fine-tuning stage, the model adapts and recovers
97.61\%, while post-route DSP usage falls from 6 to \emph{exactly zero}. The
cost is only 335 additional LUTs, or roughly 56 LUTs per eliminated DSP. For
the convolutional design, naive Po2 mapping already reduces DSPs from 72 to 12
and raises F$_{\max}$ to 115.8\,MHz, but costs 0.88 points because folded
normalization and shortcut scales are not yet Po2-constrained during training.
We report this as an explicit hardware--accuracy operating point rather than a
lossless result.

All exported variants are checked against the software integer reference on
all 10,000 MNIST test images, and C/RTL cosimulation verifies equivalence
between the generated C kernel and RTL. These are implementation-correctness
checks; we make no claim of measured board-level energy or end-to-end latency.
Full EfficientKAN controls, residual-DSP accounting, and verification details
are given in Appendix~K.

\paragraph{What the experiments establish.}
The evidence converges on a specific mechanism. Removing parity costs
$1.23$ points; adding parity planes improves accuracy monotonically; the
benefit grows from $1.23$ to $2.87$ points as ordinary width is starved; and,
at an equal $\sim 11.9$M-parameter budget, parity outperforms width by
$3.09$ points. At the same time, the controls identify the limits of the
claim: widening can still improve accuracy, the progressive schedule has no
advantage after compute is matched, and the deployed circular offsets expose
only a structured subset of all degree-2 Walsh characters. The exploratory non-convolutional suite adds a complementary
breadth result: the binary KAN exceeds a matched binary MLP on every tested
task, while its comparison to a widened KAN is mixed. Together, these results
support the intended interpretation of \bikan{}: it repairs a specific
representational loss of W1A1 KANs rather than compensating for that loss
primarily through brute-force width or training heuristics, without claiming
that parity eliminates the value of width across all domains.

\section{Discussion and Limitations}
\label{sec:limitations}

The strongest mechanism evidence comes from the paired CIFAR-10 experiments,
while CIFAR-100, Tiny-ImageNet, and the single-seed non-convolutional studies
provide supporting breadth. Our results establish an efficiency advantage over
widening rather than universal dominance, since sufficiently wide binary
networks can recover additional accuracy at substantially greater parameter
cost. The current implementation also exposes a structured circulant subset of
degree-2 Walsh characters rather than the complete quadratic basis. Finally,
the FPGA evaluation is limited to one device and benchmark; resource counts are
post-route, whereas convolutional latency is an HLS compute-core estimate rather
than a board-level measurement. Within this scope, the experiments consistently
show that restoring selected parity coordinates provides a reproducible and
parameter-efficient repair for the representational loss induced by W1A1.

\section{Conclusion}
\label{sec:conclusion}

Binarizing a polynomial KAN alters more than coefficient precision: on the
Boolean domain, its elementwise polynomial basis collapses to constant and
first-order responses. \bikan{} addresses this failure by exposing selected
degree-2 Walsh parities through fixed circular wiring and XNOR, followed by
learned binary projections. The results support this basis-restoration account:
parity survives paired multi-seed ablation, improves with plane count, becomes
more valuable as width decreases, and outperforms conventional widening at a
matched parameter budget. These gains remain compatible with XNOR--popcount
inference and substantial post-route FPGA resource reductions. \bikan{} restores a structured subset of pairwise interactions rather than the complete Boolean basis, and additional width remains complementary. Future work can explore learned parity selection, higher-order Walsh characters, and hardware-aware training that extends multiplier-free execution across the full convolutional pipeline.

\bibliography{aaai2027}


\clearpage

\appendix

This appendix provides the theoretical, statistical, and implementation detail
behind the expanded main-paper evaluation. Appendix~A proves Proposition~1.
Appendices~B and C document the pre-registered evaluation protocol, the complete
primary-dataset summary, per-seed parity evidence, and all paired statistics.
Appendix~D expands the main paper's parity dose--response and synthetic
Walsh tests, including the controls that delimit the circulant construction.
Appendix~E gives the full capacity-starvation and width frontier, while
Appendix~F separates training-recipe effects from optimization budget and
reports the complete precision trajectory. Appendix~G preserves preliminary
single-seed experiments under their original selection protocol so they are
never mixed with the primary results. Appendices~H--J contain secondary MNIST,
tabular, and cross-family studies. Appendix~K expands the main-paper FPGA table
with cross-backbone controls, derived resource reductions, exact latency
provenance, residual-DSP accounting, power-of-two experiments, and verification
cycles. Appendix~L collects the scope and limitations of the evidence.

\section{Proof of Proposition 1}

We work on the Boolean cube $\{-1,+1\}^{n}$, $n\geq2$, under the uniform
measure. For $S\subseteq[n]$, define the Walsh character
\[
    \chi_S(\mathbf{x})=\prod_{i\in S}x_i.
\]
The characters form an orthonormal basis for the real-valued functions on the
Boolean cube:
\[
    \langle\chi_S,\chi_T\rangle
    =
    \mathbb{E}_{\mathbf{x}}
    [\chi_S(\mathbf{x})\chi_T(\mathbf{x})]
    =
    \mathbf{1}[S=T].
\]
Indeed,
$\chi_S\chi_T=\chi_{S\triangle T}$ and
$\mathbb{E}[\chi_U]=0$ for every nonempty $U$. Hence every
$f:\{-1,+1\}^n\rightarrow\mathbb{R}$ admits the unique expansion
\[
    f(\mathbf{x})
    =
    \sum_{S\subseteq[n]}
    \widehat f(S)\chi_S(\mathbf{x}),
\]
and its Walsh degree is the largest $|S|$ for which
$\widehat f(S)\neq0$.

\paragraph{Part (i).}
The feature set
\[
    \{1\}
    \cup
    \{x_i\}_{i=1}^{n}
    \cup
    \{x_i x_j\}_{1\leq i<j\leq n}
\]
is exactly
$\{\chi_S:|S|\leq2\}$:
$1=\chi_{\emptyset}$,
$x_i=\chi_{\{i\}}$, and
$x_i x_j=\chi_{\{i,j\}}$.
Orthonormality makes these
$1+n+\binom{n}{2}$ functions linearly independent. Their span is therefore
precisely the subspace of pseudo-Boolean functions of Walsh degree at most
two. Consequently, a linear readout over the complete feature map
$\Phi_2$ represents every such function exactly by choosing each coefficient
equal to the corresponding Fourier coefficient. \hfill$\square$

\paragraph{Part (ii), before the sign.}
Fix $i\neq j$. Every affine function has the expansion
\[
    g(\mathbf{x})
    =
    b\chi_{\emptyset}
    +
    \sum_k w_k\chi_{\{k\}},
\]
and hence has Fourier support only in degrees zero and one. The target
$\chi_{\{i,j\}}$ has all Fourier mass at degree two. Uniqueness of the Walsh
expansion therefore implies
$g\neq\chi_{\{i,j\}}$ for every $\mathbf{w},b$.
More strongly, orthogonality gives
\[
    \|g-\chi_{\{i,j\}}\|_2^2
    =
    \|g\|_2^2+1
    \geq1.
\]
Increasing the width of an affine layer only produces more coordinates with
the same degree-$\leq1$ support and does not change this statement.
\hfill$\square$

\paragraph{Part (ii), after the sign.}
A single unit
$\operatorname{sign}(\mathbf{w}^{\top}\mathbf{x}+b)$
is a linear threshold function. Restricting to $(x_i,x_j)$ while fixing all
other coordinates, computing $x_i x_j$ requires output $+1$ on
$\{(+,+),(-,-)\}$ and $-1$ on $\{(+,-),(-,+)\}$, i.e.,
the two-bit XOR pattern. These two classes are not linearly separable
\citep{minsky1969perceptrons}. \hfill$\square$

\paragraph{Scope.}
The proposition describes the complete degree-2 feature map and a single
layer at its sign boundary. The deployed \bikan{} is intentionally smaller:
it exposes only the pairwise characters generated by the circular offsets
$\mathcal{R}=\{1,3\}$. Deeper binary networks can also synthesize pairwise
interactions through composition. Appendix~D measures both qualifications
directly rather than treating them as assumptions.

\section{Evaluation Protocol and Pre-Registered Tests}

The validation campaign comprises 117 jobs using an H100 and a V100 over a
shared filesystem. Before the corresponding full runs, the analysis plan fixed
the principal hypotheses, decision thresholds, statistics, and reporting
fallbacks. The committed plan is retained as 	\texttt{PREREGISTRATION.md} in the
supplementary artifacts so that the reported thresholds can be audited against
the decisions specified before the full campaign.

\paragraph{Hypotheses.}
The planned tests were:

\begin{itemize}
    \item \textbf{H1, parity:}
    full minus no-parity is at least $0.5$ points with $p<0.05$.

    \item \textbf{H2, dose response:}
    mean accuracy is non-decreasing over
    $|\mathcal{R}|\in\{0,1,2,4,8\}$.

    \item \textbf{H3, mechanism versus width:}
    for full-$1\times$ minus bare-$4\times$, a 95\% CI lower bound above
    $-0.25$ supports ``matches,'' while a bound above zero supports
    ``exceeds.''

    \item \textbf{H3$'$, equal parameters:}
    parity minus a parameter-matched widened bare model is positive with
    $p<0.05$.

    \item \textbf{H4, progressive training:}
    progressive minus direct W1A1 at the final-stage budget is at least
    $1.0$ point with $p<0.05$; the total-compute-matched comparison is
    reported regardless of outcome.

    \item \textbf{H5, capacity starvation:}
    the paired parity gap increases as width is reduced.

    \item \textbf{H6, synthetic task:}
    covered-parity accuracy is at least 99\%, the FP linear baseline is at
    most 55\%, and uncovered-parity accuracy is at most 60\%.
\end{itemize}

The reporting rule was fixed in advance: a failed threshold is reported as
negative or inconclusive rather than replaced with a post-hoc criterion.

\paragraph{Checkpoint selection.}
Each primary dataset uses a held-out validation split with a fixed split seed.
Checkpoints are selected using validation accuracy only. The primary reported
metric is BN-recalibrated test accuracy of that validation-selected checkpoint.
The earlier preliminary experiments instead selected on the test set; those
runs are isolated in Appendix~G and are not pooled with the primary results.

\paragraph{Pairing and statistics.}
Seeds are paired across compared arms. For every dataset and seed, compared
students share the same FP32 teacher checkpoint. We use two-sided paired
$t$-tests at $\alpha=0.05$ and report mean$\pm$standard deviation with the
number of seeds. Primary CIFAR-10 comparisons use five seeds; secondary arms
and CIFAR-100 use three where stated.

\paragraph{Machine colocation.}
Every primary paired comparison was colocated on one accelerator. Hence
H100/V100 differences in TF32 behavior, cuDNN kernels, or runtime details
cannot masquerade as treatment effects. Only several secondary recipe arms
cross machines; their hardware provenance is retained with the artifacts.

\paragraph{Tiny-ImageNet exception.}
Tiny-ImageNet uses two seeds and the existing
\texttt{val\_fraction=0} fallback, with the final checkpoint rather than a
validation-selected checkpoint. It is therefore treated as supporting
evidence rather than as a primary statistical result.

\paragraph{Outcomes.}
H1, H2, H3$'$, H5, and H6 pass their stated criteria. H3 misses its
pre-registered ``matches'' boundary narrowly: the CI lower bound is $-0.28$
rather than the required $-0.25$, so the main paper calls the two models
statistically indistinguishable rather than claiming superiority. H4 passes
at equal final-stage budget but disappears when total optimization compute is
matched; the main paper reports both comparisons.

\section{Primary Results and Complete Paired Statistics}

\subsection{Primary Dataset Summary}

Table~\ref{tab:primaryapp} mirrors the expanded main-paper classification table
and makes the evidence hierarchy explicit. CIFAR-10 is the strongest dataset
for inferential claims because it uses five paired seeds. CIFAR-100 uses three
paired seeds and exhibits larger variance. Tiny-ImageNet uses two seeds and the
final-checkpoint fallback described in Appendix~B, so it is included only as
supporting breadth.

\begin{table*}[t]
\centering
\scriptsize
\setlength{\tabcolsep}{5pt}
\begin{tabular}{@{}lrrrrrr@{}}
\toprule
Dataset & FP32 teacher & \bikan{} W1A1 & $\Delta$ & $n$ & Paired $p$ & Student params. \\
\midrule
MNIST
& $99.53{\pm}0.01$
& $99.48{\pm}0.05$
& $-0.05$
& 3
& 0.260
& 11.90M \\

CIFAR-10
& $83.11{\pm}0.34$
& $\mathbf{84.38{\pm}0.14}$
& $\mathbf{+1.26}$
& 5
& $6.6\times10^{-4}$
& 11.94M \\

CIFAR-100
& $53.24{\pm}2.83$
& $\mathbf{55.81{\pm}1.86}$
& $\mathbf{+2.57}$
& 3
& 0.047
& 12.68M \\

Tiny-ImageNet$^\dagger$
& $32.95{\pm}1.80$
& $37.15{\pm}0.68$
& $+4.20$
& 2
& --
& 13.50M \\
\bottomrule
\end{tabular}
\caption{\textbf{Primary W1A1 dataset summary.}
The primary metric is BN-recalibrated test accuracy of the validation-selected
checkpoint. $^\dagger$Tiny-ImageNet uses the no-validation-split fallback and
is not assigned a headline significance claim.}
\label{tab:primaryapp}
\end{table*}

The CIFAR-10 and CIFAR-100 rows should not be interpreted as showing that
precision reduction by itself improves a fixed FP32 function. The student also
changes the representation through explicit parity coordinates and is trained
with teacher supervision. The dataset-level result establishes that this W1A1
representation is competitive; the mechanism studies below establish why.

\subsection{Per-Seed Parity Evidence}

The central on/off mechanism test is reported seed by seed in
Table~\ref{tab:parityseeds}. Every paired seed favors the parity model. This is
useful context for the mean effect because the individual gain ranges from
$0.69$ to $1.68$ points; a single seed can therefore materially understate or
overstate the effect.

\begin{table}[t]
\centering
\scriptsize
\setlength{\tabcolsep}{5pt}
\begin{tabular}{@{}rrrr@{}}
\toprule
Seed & Full \bikan{} & No parity & Paired difference \\
\midrule
0 & 82.94 & 81.34 & $+1.60$ \\
1 & 83.42 & 81.74 & $+1.68$ \\
2 & 82.68 & 81.99 & $+0.69$ \\
3 & 82.77 & 81.51 & $+1.26$ \\
4 & 83.21 & 82.28 & $+0.93$ \\
\midrule
Mean & 83.00 & 81.77 & $\mathbf{+1.23}$ \\
Std. & 0.31 & 0.38 & 0.43 \\
\bottomrule
\end{tabular}
\caption{\textbf{Per-seed parity ablation on CIFAR-10.}
Values are BN-recalibrated test accuracies from validation-selected checkpoints.}
\label{tab:parityseeds}
\end{table}

\subsection{All Paired Comparisons}

Table~\ref{tab:pairedfull} collects the paired comparisons underlying the main
claims.

\begin{table}[t]
\centering
\scriptsize
\setlength{\tabcolsep}{2.1pt}
\begin{tabular}{@{}lrrrr@{}}
\toprule
Paired difference & Mean$\pm$sd & $n$ & $t$ & $p$ \\
\midrule
H1: full $-$ no parity
& $+1.23{\pm}0.43$ & 5 & 6.49 & 0.003 \\

full $-$ bare
& $+4.82{\pm}0.40$ & 5 & 26.92 & $<10^{-4}$ \\

H3: full $1\times$ $-$ bare $4\times$
& $+0.33{\pm}0.49$ & 5 & 1.49 & 0.210 \\

H3$'$: parity $-$ parameter-matched bare
& $+3.09{\pm}0.31$ & 5 & 22.27 & $<10^{-4}$ \\

H4a: progressive $-$ direct40
& $+1.87{\pm}0.22$ & 5 & 19.41 & $<10^{-4}$ \\

H4b: progressive $-$ direct160
& $-0.08{\pm}0.41$ & 3 & $-0.34$ & 0.766 \\

H5: parity gap at $0.25\times$
& $+2.87{\pm}0.67$ & 3 & 7.43 & 0.018 \\

H5: parity gap at $0.50\times$
& $+1.66{\pm}0.40$ & 3 & 7.18 & 0.019 \\
\midrule
full recipe $-$ no EDE
& $-0.38{\pm}0.16$ & 3 & $-4.06$ & 0.056 \\

full recipe $-$ no AT
& $-0.20{\pm}0.22$ & 3 & $-1.58$ & 0.255 \\

full recipe $-$ no EMA
& $+0.02{\pm}0.29$ & 3 & 0.14 & 0.902 \\

full recipe $-$ no diversity
& $-0.07{\pm}0.13$ & 3 & $-1.00$ & 0.423 \\

full recipe $-$ no latent-LR boost
& $-0.14{\pm}0.05$ & 3 & $-4.93$ & 0.039 \\
\bottomrule
\end{tabular}
\caption{\textbf{Complete paired CIFAR-10 statistics.}
All values use BN-recalibrated accuracy from validation-selected checkpoints.
A negative recipe difference means the removal arm scored nominally above the
full recipe.}
\label{tab:pairedfull}
\end{table}

One secondary recipe comparison---removing the latent-weight LR multiplier---
has nominal $p<0.05$, but the difference is only $0.14$ points and is in the
opposite direction to an ablation penalty. With several small-$n$ recipe
comparisons, we do not interpret this isolated value as evidence that removing
the component is intrinsically beneficial. The appropriate conclusion is
simply that none of these auxiliary recipe components is required to obtain
the reported W1A1 accuracy.

\section{Parity Dose Response, Pairing, and Synthetic Tasks}

\subsection{Dose Response}

\begin{table}[t]
\centering
\scriptsize
\setlength{\tabcolsep}{3pt}
\begin{tabular}{@{}lrrl@{}}
\toprule
Configuration & Accuracy (\%) & $n$ & Note \\
\midrule
$|\mathcal{R}|=0$
& \ms{81.74}{0.53} & 3 & no parity \\

$|\mathcal{R}|=1$
& \ms{82.67}{0.10} & 3 & \\

$|\mathcal{R}|=2$
& \ms{82.96}{0.12} & 5 & deployed \\

$|\mathcal{R}|=4$
& \ms{83.47}{0.37} & 3 & \\

$|\mathcal{R}|=8$
& \ms{84.00}{0.08} & 3 & \\
\midrule
random pairs, two planes
& \ms{82.94}{0.10} & 3 & control \\

rolls $(1,7)$
& \ms{83.08}{0.38} & 3 & control \\

rolls $(2,11)$
& \ms{82.76}{0.08} & 3 & control \\
\bottomrule
\end{tabular}
\caption{\textbf{Exact values behind the parity dose-response figure.}
All arms use one uniform direct-to-W1A1 protocol.}
\label{tab:doseapp}
\end{table}

Accuracy rises monotonically across all five tested plane counts. The
$|\mathcal{R}|=2$ arm here
($82.96\pm0.12$, $n=5$) also closely agrees with the independently configured
mechanism-ablation full arm ($83.00\pm0.31$), providing a useful cross-study
consistency check.

The random-pair and alternative-offset controls do not reveal a systematic
advantage over the deployed rolls. We therefore find no evidence, at the
resolution of these experiments, that the circulant wiring restriction imposes
an accuracy penalty. This should not be read as proving that all pairing
patterns are equivalent or that the chosen offsets are globally optimal.

\subsection{Synthetic Degree-2 Tasks}

The main paper reports a compact subset of this experiment to connect
Proposition~1 to measurement. Here we report the complete composition sweep,
including intermediate sign-network widths and the additional signed-readout
control, to distinguish \emph{direct basis access} from representability
obtained indirectly through depth and width.

Each synthetic sample is a 64-dimensional vector in
$\{-1,+1\}^{64}$, and each target depends on seven pairwise interactions.
For the \emph{covered} task, the relevant pair distances lie in the orbit of
$\mathcal{R}=\{1,3\}$; for the \emph{uncovered} task, pair distances are
chosen outside that orbit.

\begin{table}[t]
\centering
\scriptsize
\setlength{\tabcolsep}{2.4pt}
\begin{tabular}{@{}lrrr@{}}
\toprule
Model & Covered & Uncovered & Params. \\
\midrule
Linear FP
& \ms{49.9}{1.5}
& \ms{50.0}{0.3}
& 65 \\

Parity map, $\mathcal{R}=(1,3)$
& \ms{\textbf{100.0}}{0.0}
& \ms{50.2}{0.3}
& \textbf{193} \\

Sign model, one hidden layer, w1024
& \ms{50.0}{0.2}
& \ms{49.9}{1.5}
& 67.6K \\

Sign MLP, two layers, w64
& \ms{62.7}{1.3}
& \ms{61.3}{2.3}
& 8.4K \\

Sign MLP, two layers, w256
& \ms{81.4}{1.1}
& \ms{80.1}{1.7}
& 82.7K \\

Sign MLP, two layers, w1024
& \ms{87.0}{1.9}
& \ms{86.4}{1.1}
& 1.12M \\
\bottomrule
\end{tabular}
\caption{\textbf{Synthetic degree-2 targets.}
Results use three seeds. Covered pairs lie inside the deployed roll orbit;
uncovered pairs lie outside it.}
\label{tab:synth}
\end{table}

The experiment supports both the useful part and the limitation of the theory.
For covered targets, the explicit parity map with only 193 parameters reaches
$100\%$, whereas the FP linear model remains at chance and even the largest
tested two-hidden-layer sign MLP reaches $87.0\%$ with 1.12M parameters.
This does not imply that deeper binary networks cannot represent the target;
rather, it shows the cost of synthesizing an interaction that is available as
a direct coordinate to the parity model.

The uncovered task gives the complementary result. When the relevant pair
distances do not occur in the deployed roll orbit, the same parity map remains
at chance ($50.2\%$). This is exactly the limitation stated in the main-paper
analysis: the implementation restores a structured circulant subset of the
degree-2 Walsh basis rather than all $\binom{n}{2}$ pairwise characters.

All three pre-registered synthetic thresholds are satisfied:
covered $\geq99\%$, FP linear $\leq55\%$, and uncovered $\leq60\%$.
A further variant applies a sign to the parity readout and reaches
$56.6\pm1.3\%$, reinforcing that Proposition~1(i) concerns linear functions
of $\Phi_2$ \emph{before} an additional sign boundary.

\section{Width and Capacity Controls}

\subsection{Capacity Starvation}

\begin{table}[t]
\centering
\scriptsize
\setlength{\tabcolsep}{3pt}
\begin{tabular}{@{}lrrrr@{}}
\toprule
Width & No parity & + parity & Gap & $p$ \\
\midrule
$0.25\times$
& \ms{72.01}{0.40}
& \ms{74.89}{0.28}
& $+2.87$
& 0.018 \\

$0.50\times$
& \ms{78.18}{0.70}
& \ms{79.84}{0.41}
& $+1.66$
& 0.019 \\

$1.00\times$
& \ms{81.77}{0.38}
& \ms{83.00}{0.31}
& $+1.23$
& 0.003 \\
\bottomrule
\end{tabular}
\caption{\textbf{Capacity starvation on CIFAR-10.}
$n=3$ for reduced widths and $n=5$ at full width.}
\label{tab:starveapp}
\end{table}

The paired parity advantage increases monotonically as the base model narrows.
The reduced-width parity/no-parity configurations contain approximately
0.77M/0.39M parameters at $0.25\times$ width and 3.02M/1.51M at
$0.5\times$. This directional result is consistent with the interpretation
that direct pairwise coordinates become more useful when generic composition
capacity is constrained.

\subsection{Equal-Parameter Control}

The parity projection adds learned binary parameters, so an on/off parity
comparison alone cannot distinguish structured capacity from parameter count.
We therefore widen the bare binary model until its parameter count nearly
matches the two-plane parity model.

\begin{table}[t]
\centering
\scriptsize
\setlength{\tabcolsep}{5pt}
\begin{tabular}{@{}lrrr@{}}
\toprule
Configuration & Params. & Accuracy (\%) & $n$ \\
\midrule
Bare, parameter matched
& 11.77M & \ms{79.87}{0.27} & 5 \\

Two-plane parity
& 11.94M & \ms{\textbf{82.96}}{0.12} & 5 \\
\midrule
Difference
& -- & $\mathbf{+3.09{\pm}0.31}$ & 5 \\
\bottomrule
\end{tabular}
\caption{\textbf{Equal-parameter mechanism control.}}
\label{tab:parammatch}
\end{table}

The five paired improvements are
$+3.17$, $+3.26$, $+3.07$, $+3.37$, and $+2.57$ points, producing
$t(4)=22.27$ and $p<10^{-4}$. Thus, at this matched budget, ordinary widening
does not reproduce the benefit of allocating parameters to explicit
degree-2 features.

\subsection{Broader Width Frontier}

The main paper's width analysis draws on three distinct comparisons.

First, under direct training, full \bikan{} at teacher width is compared with
a bare $4\times$ model containing 94.67M parameters. The five paired
differences are
$[0.17,\,1.03,\,-0.27,\,0.15,\,0.55]$ points.
The mean difference is $+0.33\pm0.49$, but $p=0.210$ and the confidence
interval includes zero. We therefore describe the models as statistically
indistinguishable rather than claiming that \bikan{} exceeds the widened
opponent.

Second, the equal-parameter comparison above removes the size confound and
strongly favors parity.

Third, the bare $4\times$ model trained with the complete progressive recipe
reaches $84.76\pm0.02\%$ over its two completed seeds, compared with
$84.38\pm0.14\%$ for teacher-width \bikan{}. This is why the paper's width
claim is explicitly an efficiency claim rather than a dominance claim.

Exploratory full-stack width scaling further shows that parity and width are
complementary rather than mutually exclusive: full-$2\times$ reaches
$84.60\pm0.27\%$ at 47.47M parameters ($n=3$), while full-$4\times$ reaches
85.77\% at 189.32M parameters in the available single run.

\section{Optimization and Recipe Controls}

\subsection{Progressive Training versus Direct W1A1}

The four-stage continuation trajectory uses
W1A8$\rightarrow$W1A4$\rightarrow$W1A2$\rightarrow$W1A1.
A direct 40-epoch W1A1 run matches only the ladder's final-stage budget,
whereas a direct 160-epoch run matches its total optimization steps.

\begin{table}[t]
\centering
\scriptsize
\setlength{\tabcolsep}{4pt}
\begin{tabular}{@{}lrrr@{}}
\toprule
Training & Accuracy (\%) & $n$ & Difference \\
\midrule
Direct W1A1, 40 epochs
& \ms{82.51}{0.25}
& 5
& $-1.87$ \\

Progressive, 160 total
& \ms{\textbf{84.38}}{0.14}
& 5
& -- \\

Direct W1A1, 160 epochs
& \ms{84.43}{0.29}
& 3
& $+0.08$ \\
\bottomrule
\end{tabular}
\caption{\textbf{Optimization-budget decomposition.}}
\label{tab:budgetapp}
\end{table}

Progressive training strongly exceeds the short direct run
($p<10^{-4}$), but not the total-compute-matched direct run
($p=0.766$). We therefore do not attribute an independent accuracy gain to
the precision schedule itself. Its practical role is as a continuation
strategy: a single trajectory produces usable W1A8, W1A4, W1A2, and W1A1
checkpoints before reaching the final binary operating point.

\subsection{Precision Trajectory}

\begin{table*}[t]
\centering
\scriptsize
\setlength{\tabcolsep}{6pt}
\begin{tabular}{@{}lcccc@{}}
\toprule
Stage &
MNIST &
CIFAR-10 &
CIFAR-100 &
Tiny-ImageNet \\
\midrule
FP32 teacher
& \ms{99.53}{0.01}
& \ms{83.11}{0.34}
& \ms{53.24}{2.83}
& \ms{32.95}{1.80} \\

W1A8
& \ms{99.52}{0.03}
& \ms{85.57}{0.19}
& \ms{58.27}{1.71}
& \ms{39.45}{1.56} \\

W1A4
& \ms{99.52}{0.05}
& \ms{85.81}{0.12}
& \ms{58.85}{2.07}
& \ms{39.59}{1.46} \\

W1A2
& \ms{99.51}{0.05}
& \ms{85.51}{0.34}
& \ms{57.55}{1.93}
& \ms{38.75}{1.58} \\

W1A1, BN-recal.
& \ms{99.48}{0.05}
& \ms{84.38}{0.14}
& \ms{55.81}{1.86}
& \ms{37.15}{0.68} \\
\bottomrule
\end{tabular}
\caption{\textbf{Precision trajectory.}
$n=3,5,3,2$ respectively for MNIST, CIFAR-10, CIFAR-100, and
Tiny-ImageNet. Tiny-ImageNet uses the selection fallback described in
Appendix~B.}
\label{tab:precisionapp}
\end{table*}

The intermediate multi-bit states are often more accurate than the final W1A1
model, particularly on CIFAR-10 and CIFAR-100. This observation should not be
interpreted as evidence that low precision intrinsically regularizes KANs;
the experiments were not designed to establish such a mechanism. Rather, the
trajectory shows that progressive training passes through strong executable
intermediate-precision solutions before reaching W1A1, while the
compute-matched control above shows that the schedule itself is not required
for the final W1A1 accuracy.

\subsection{Recipe Ablation}

The three-seed matched full-recipe reference is
$84.35\pm0.17\%$. The corresponding removal arms are:

\begin{table}[t]
\centering
\scriptsize
\setlength{\tabcolsep}{3pt}
\begin{tabular}{@{}lrrr@{}}
\toprule
Configuration & Accuracy & $\Delta$ & $p$ \\
\midrule
Full
& \ms{84.35}{0.17}
& -- & -- \\

No attention transfer
& \ms{84.55}{0.34}
& $+0.20$ & 0.255 \\

No EMA teacher
& \ms{84.33}{0.44}
& $-0.02$ & 0.902 \\

No diversity loss
& \ms{84.42}{0.29}
& $+0.07$ & 0.423 \\

No latent-LR boost
& \ms{84.49}{0.17}
& $+0.14$ & 0.039 \\

No EDE
& \ms{84.73}{0.21}
& $+0.38$ & 0.056 \\
\bottomrule
\end{tabular}
\caption{\textbf{Secondary training-recipe ablation.}
$\Delta$ denotes removal arm minus full.}
\label{tab:recipeapp}
\end{table}

No removal produces a significant accuracy degradation. The no-EDE arm uses
the intended analytic contrast: a mean-center-only weight forward with an
identity straight-through backward, removing Libra standardization and the
EDE backward jointly. These results reinforce the attribution used in the
main paper: the parity mechanism, rather than a uniquely necessary collection
of training heuristics, is the contribution supported by the controlled
experiments.

\section{Preliminary Single-Seed Experiments}

Before the paired validation campaign, several experiments used one seed and
test-based model selection. The best-on-test checkpoint was reloaded at each
stage, and the reported value could use the better of the best-test and
BN-recalibrated score. These results are retained for provenance and historical
comparison only.

\begin{table}[t]
\centering
\scriptsize
\setlength{\tabcolsep}{2.5pt}
\begin{tabular}{@{}lrrrr@{}}
\toprule
Configuration & MNIST & C-10 & C-100 & T-IN \\
\midrule
FP32 teacher
& 99.55 & 83.75 & 54.30 & 31.15 \\

\bikan{} W1A1 best
& 99.50 & 84.93 & 56.92 & 36.84 \\

Hyper-wide W1A1
& 98.31 & 79.51 & 52.51 & -- \\

Hyper-wide FP32 teacher
& 99.52 & 79.79 & 54.54 & -- \\
\bottomrule
\end{tabular}
\caption{\textbf{Preliminary results under the original single-seed,
test-selected protocol.} These values are not pooled with or substituted for
the primary multi-seed results.}
\label{tab:legacy}
\end{table}

The exact parameter counts remain useful for understanding model scale.
The paired-study FP32 teachers contain approximately
1.86M/1.86M/1.98M/2.11M parameters for
MNIST/CIFAR-10/CIFAR-100/Tiny-ImageNet.
The corresponding \bikan{} students contain
11.90M/11.94M/12.68M/13.50M.
The historical hyper-wide W1A1 model contains
94.59M/94.67M/96.14M parameters on the first three datasets, approximately
$7.6$--$8.0\times$ the parameter count of teacher-width \bikan{}.

The historical numbers are not used for inferential claims because their
checkpoint-selection rule differs from the primary protocol.

\section{MNIST Mechanism Ablation}

MNIST provides little power to separate the proposed mechanisms because the
task is saturated. In the preliminary ablation, full and no-parity both reach
99.54\%, frozen STE reaches 99.57\%, bare reaches 99.42\%, and no-RPReLU
reaches 99.27\%. The paired validation campaign therefore allocates its MNIST
budget to the primary precision trajectory rather than repeating mechanism
arms whose expected differences are small relative to the task ceiling.
All mechanism claims in the paper are consequently based on CIFAR-10 rather
than on MNIST.

\section{Tabular and MLP Suite Details}

The main paper now reports the compact performance table for this suite because
it provides useful breadth beyond the convolutional KAGN experiments. This
appendix records the architectures, parameter budgets, recalibration behavior,
and limitations needed to interpret those numbers. All runs use one seed and
the preliminary test-selected protocol; they are not pooled with the
pre-registered multi-seed campaign.

\begin{table*}[t]
\centering
\scriptsize
\setlength{\tabcolsep}{4.2pt}
\begin{tabular}{@{}llrrrr@{}}
\toprule
Dataset & Layer dimensions &
FP32 KAN & \bikan{} & Wide W1A1 & Binary MLP \\
\midrule
Wine
& 13--4--3
& 704 & 1,152 & 2,260 & 141 \\
Dry Bean
& 16--2--7
& 506 & 870 & 1,646 & 126 \\
JSC OpenML
& 16--8--5
& 8,736 & 2,890 & 5,738 & 298 \\
Tiny-ImageNet MLP
& 12288--256--64--200
& 31.75M & 19.10M & 51.64M & 3.22M \\
Traffic
& 72--32--96
& 112.9K & 87.0K & 173.7K & 6.0K \\
\bottomrule
\end{tabular}
\caption{\textbf{Architectures and parameter counts for the exploratory
tabular/MLP suite.} ``Wide'' denotes the widened W1A1 KAN opponent used in
the main-paper breadth table.}
\label{tab:tabular_params}
\end{table*}

\paragraph{Batch-normalization recalibration.}
The compact main-paper table reports the primary stored scores from the
original artifacts. Where recalibration changes the value, the corresponding
BN-recalibrated scores are: Wine, 38.89 for \bikan{} and 52.78 for the wide
model; Dry Bean, 81.23 and 89.31; JSC, 71.62 and 71.39; Tiny-ImageNet MLP,
10.63 and 12.08; and Traffic binary MLP, 0.1611 RMSE. Wine contains only
36 test examples, making its recalibration particularly unstable. These
differences are another reason the suite is presented as exploratory rather
than combined with the validation-selected campaign.

\paragraph{What the suite does and does not show.}
Under the stored single-run scores, \bikan{} exceeds the matched binary MLP on
every task: by 13.89 points on Wine, 30.95 on Dry Bean, 18.24 on JSC, and
3.00 on the Tiny-ImageNet MLP, while reducing Traffic RMSE by 0.0508. This
consistent direction supports the limited claim that a binary KAN
representation retains useful structure relative to a generic binary MLP.

The width comparison is mixed. \bikan{} and the wide W1A1 model are
effectively tied on JSC (71.67 versus 71.76), but the wide model performs
better on Wine, Dry Bean, Tiny-ImageNet MLP, and Traffic. The suite therefore
does not support universal width independence. It is consistent instead with
the controlled CIFAR-10 conclusion: explicit basis structure can be a better
use of a fixed budget, while additional width remains beneficial on some
tasks.

Traffic provides one additional reference. A full-precision ReLU MLP reaches
0.1007 RMSE with 51.5K parameters, essentially matching the FP32 KAN teacher
at 0.1001. The difference emerges after binarization: \bikan{} obtains
0.1082, whereas the binary MLP degrades to 0.1590. This suggests that the
observed advantage is specific to the low-precision representation rather
than an intrinsic superiority of the FP32 KAN on this task.

\section{Cross-Family Historical Context}

These experiments also use the preliminary single-seed protocol.
Teacher-width dense students reach 96.85\% versus a 97.66\% FP32 teacher for
EfficientKAN and 96.79\% versus 97.63\% for PyKAN. The recorded student
parameter count is 818,484 in both artifacts.

The attempted FastKAN family run recorded zero completed epochs and no valid
result, so it is not reported as an empirical data point.

For historical context, earlier FastKAN experiments illustrate how strongly
naive W1A1 can depend on additional capacity or precision: naive W1A1 was
approximately 86.75\%, learnable scales plus distillation recovered 94.42\%,
a widened hybrid design with a binary spline path reached 97.86\% against a
97.87\% teacher, and widened pure W1A1 reached 96.11\%. EfficientKAN
separately reached 97.43\% versus a 97.32\% wide FP32 teacher in a
mixed-precision configuration with higher-precision boundary layers.

Because these experiments differ in protocol and architecture, they are
presented only as context rather than direct competitors to the controlled
CIFAR-10 mechanism studies.

\section{FPGA Implementation and Measurement Provenance}

\subsection{Platform}

All implementations target the Xilinx Zynq-7020
(\texttt{xc7z020-clg400-1}) with a nominal 100\,MHz target and are evaluated
using the MNIST 10,000-image test set. Reported DSP, LUT, FF, and BRAM counts
are taken from Vivado post-place-and-route implementation results rather than
from HLS resource estimates.

Latency has two distinct sources and is never mixed within a claimed speedup.
For the dense GRAM and EfficientKAN kernels, latency is obtained from C/RTL
cosimulation. Full FP32 convolutional RTL simulation is prohibitively slow, so
the KAGN FP32/W1A1 comparison uses the minimum HLS scheduling estimate for
both designs. The convolutional W1A1 RTL simulation that does complete
requires 45,092,425 cycles, corresponding to 450.9\,ms at 100\,MHz, and is
DDR-bound. This value is retained as verification evidence rather than mixed
with the csynth-based FP32 comparison.

\begin{table*}[t]
\centering
\scriptsize
\setlength{\tabcolsep}{4pt}
\begin{tabular}{@{}llrrrrrrcl@{}}
\toprule
Family & Design &
Acc. & DSP & LUT & FF & BRAM & F$_{\max}$ &
Latency & Source \\
\midrule
GRAM
& FP32
& 97.64 & 104 & 14,417 & 13,447 & 30 & 100.7
& 59.6 ms & RTL cosim \\

GRAM
& W1A1
& 97.64 & 6 & 9,365 & 4,791 & 272 & 98.9
& 0.705 ms & RTL cosim \\

GRAM
& W1A1 + Po2-QAT
& 97.61 & \textbf{0} & 9,700 & 5,170 & 272 & 98.9
& 0.705 ms & RTL cosim \\
\midrule
KAGN Conv
& FP32
& 99.55 & 164 & 25,155 & 23,929 & 224 & 103.0
& 401 ms & csynth-min \\

KAGN Conv
& W1A1
& 99.49 & 72 & 9,144 & 11,482 & 215 & 110.0
& 54.8 ms & csynth-min \\

KAGN Conv
& W1A1 + naive Po2
& 98.61 & 12 & 11,555 & 13,746 & 200 & 115.8
& 54.8 ms & csynth-min \\
\midrule
EfficientKAN
& FP32
& 98.22 & 70 & 16,355 & 17,025 & 66 & 106.9
& 54.4 ms & RTL cosim \\

EfficientKAN
& binary configuration
& 96.20 & 54 & 12,081 & 12,680 & 95 & 107.7
& 44.6 ms & RTL cosim \\
\bottomrule
\end{tabular}
\caption{\textbf{Complete post-place-and-route hardware results.}
BRAM counts are 18-Kb blocks. Convolutional latency is the common
\texttt{csynth-min} estimate; dense latency is C/RTL-cosim cycle derived.}
\label{tab:hwmaster}
\end{table*}

\subsection{Interpreting the Hardware Results}

\paragraph{Why two \bikan{} hardware regimes are reported.}
The GRAM and convolutional implementations answer complementary questions.
GRAM isolates the case in which nearly all large learned inner products can be
mapped to XNOR--popcount, exposing the upper end of the W1A1 hardware benefit
and permitting a strictly zero-DSP implementation. The convolutional KAGN
backbone is the architecture used for the primary image-classification
experiments and therefore measures how much of this advantage survives when
normalization, shortcuts, and other non-binary support operations remain.
EfficientKAN provides a further coverage control: because a larger fraction of
its tested datapath remains non-binary, its hardware reductions are
correspondingly smaller.

The benefit of W1A1 depends on how much of the architecture can actually be
mapped to the binary datapath.

For GRAM, the binary transformation covers nearly all large learned inner
products. W1A1 preserves 97.64\% accuracy while reducing DSP usage from 104 to
6 and RTL-cosim latency from 59.6\,ms to 0.705\,ms, an approximately
$84\times$ reduction in cycle-derived latency. BRAM increases because the
binary implementation adopts a different storage strategy and can place more
weights on chip; resource improvements should therefore be read per resource
rather than as uniform reductions in every category.

The convolutional KAGN model retains more non-binary operations around each
binary block. Its W1A1 implementation reduces DSPs from 164 to 72 and LUTs
from 25,155 to 9,144 while preserving accuracy within 0.06 points. Under the
common HLS scheduling model, compute-core latency changes from 401 to
54.8\,ms, a $7.3\times$ reduction. This is not a board-level latency
measurement.

EfficientKAN receives a smaller gain because its tested binary configuration
retains higher-precision boundary layers and a floating-point spline path.
Only part of the accelerator therefore maps to XNOR--popcount, and the
resulting hardware gain is correspondingly smaller.

\subsection{Where the Remaining DSPs Come From}

The binary dot product is
\[
    d
    =
    2\,\operatorname{popcount}
    \left(
        \operatorname{XNOR}(\mathbf{q},\mathbf{b})
    \right)
    -N,
\]
which synthesizes to LUT/FF logic without multiplier DSPs.

For GRAM, the residual DSPs arise primarily from per-channel scale
multiplications and surrounding real-valued operations. For the convolutional
KAGN implementation, additional scale, folded-normalization, shortcut,
RPReLU, head, and indexing arithmetic remain outside the pure XNOR core.
EfficientKAN additionally retains floating-point spline and mixed-precision
arithmetic, so its residual DSP use is intrinsic to a larger fraction of the
tested architecture.

\subsection{Power-of-Two Scale Mapping}

Simply rounding trained GRAM scales to powers of two after training reduces
accuracy to 55.46\%. This failure shows that scale replacement is not a
lossless post-processing step.

Instead, Po2-aware fine-tuning places the rounded power-of-two scale in the
forward pass so that the remaining parameters adapt to the value used by the
hardware. The resulting GRAM model reaches 97.61\%, compared with 97.64\% for
both the FP32 and original W1A1 models, while post-route DSP use falls from six
to exactly zero.

\begin{table}[t]
\centering
\scriptsize
\setlength{\tabcolsep}{3.5pt}
\begin{tabular}{@{}lrrr@{}}
\toprule
Design & DSP & LUT & Accuracy \\
\midrule
GRAM W1A1
& 6 & 9,365 & 97.64 \\

GRAM Po2-QAT
& \textbf{0} & 9,700 & 97.61 \\
\midrule
Conv W1A1
& 72 & 9,144 & 99.49 \\

Conv naive Po2
& \textbf{12} & 11,555 & 98.61 \\
\bottomrule
\end{tabular}
\caption{\textbf{DSP--LUT trade-off from power-of-two scale mapping.}}
\label{tab:po2}
\end{table}

The zero-DSP GRAM design spends only 335 additional LUTs to eliminate the
final six DSPs, approximately 56 LUTs per removed DSP. In the convolutional
model, naive Po2 mapping removes 60 DSPs at a cost of 2,411 LUTs, approximately
40 LUTs per removed DSP.

The convolutional model does not yet achieve a strict zero-DSP implementation
because folded normalization and shortcut scales remain unconstrained during
training. Naively rounding these scales contributes to the observed
$99.49\%\rightarrow98.61\%$ accuracy reduction. A fully Po2-constrained
convolutional training path is therefore left as future work rather than
claimed here.

\subsection{Derived Hardware Reductions}

Table~\ref{tab:hwreductions} converts the absolute implementation results in
Table~\ref{tab:hwmaster} into like-for-like reduction factors. These ratios are
reported only when numerator and denominator use the same measurement source.
In particular, the convolutional latency ratio compares \texttt{csynth-min}
against \texttt{csynth-min}; it is not mixed with the DDR-bound RTL-cosim
number.


\begin{table}[t]
\centering
\scriptsize
\setlength{\tabcolsep}{3.0pt}
\renewcommand{\arraystretch}{1.08}
\begin{tabular}{@{}lcccc@{}}
\toprule
\textbf{Comparison} &
\textbf{DSP} &
\textbf{LUT} &
\textbf{FF} &
\textbf{Lat.} \\
\midrule
GRAM: W1A1 / FP32
& $17.3\times$ & $1.54\times$ & $2.81\times$ & $84.5\times$ \\
KAGN: W1A1 / FP32
& $2.28\times$ & $2.75\times$ & $2.08\times$ & $7.32\times$ \\
EfficientKAN: Bin. / FP32
& $1.30\times$ & $1.35\times$ & $1.34\times$ & $1.22\times$ \\
\bottomrule
\end{tabular}
\caption{\textbf{Derived hardware reductions.}
Reduction factors for DSPs, LUTs, flip-flops (FFs), and latency are computed
only between configurations sharing the same implementation and latency
measurement source.}
\label{tab:hwreductions}
\end{table}

The cross-backbone trend provides a useful control for the hardware
interpretation. GRAM, where almost every large learned inner product maps to
XNOR--popcount, obtains the largest arithmetic and latency reductions. The
convolutional model retains more real-valued support operations and therefore
lands in the middle. EfficientKAN retains higher-precision boundary layers and
floating-point spline operations, so its binary coverage and corresponding
hardware gain are smaller. This pattern is consistent with a coverage-based
hardware explanation rather than with attributing every reduction to the
parity path itself.

\subsection{Verification}

Hardware validation proceeds at several levels:

\begin{enumerate}
    \item exported parameters are replayed in an integer software reference;
    \item the generated C/C++ HLS kernel is compared with that reference;
    \item the exported implementation reproduces the expected predictions on
    all 10,000 MNIST test samples; and
    \item C/RTL cosimulation verifies equivalence between the generated C
    kernel and synthesized RTL.
\end{enumerate}

The completed C/RTL-cosimulation cycle counts are summarized in
Table~\ref{tab:cosimcycles}. These counts are reported as verification and
measurement provenance. The convolutional value is intentionally not used to
form the FP32/W1A1 latency ratio because no corresponding FP32-convolution
RTL-cosim run is available under the same measurement path; all reported
speedup factors therefore compare like-for-like latency sources.

\begin{table}[t]
\centering
\scriptsize
\setlength{\tabcolsep}{4pt}
\begin{tabular}{@{}lrr@{}}
\toprule
Kernel & RTL-cosim cycles & Equivalent at 100 MHz \\
\midrule
GRAM W1A1 & 70,503 & 0.705 ms \\
KAGN Conv W1A1 & 45,092,425 & 450.9 ms \\
EfficientKAN binary & 4,456,054 & 44.6 ms \\
\bottomrule
\end{tabular}
\caption{\textbf{Completed C/RTL-cosimulation runs.}
The KAGN convolutional run is DDR-bound and is used for functional validation,
not for the main FP32/W1A1 speedup.}
\label{tab:cosimcycles}
\end{table}

The Po2-QAT GRAM export passes the same numerical checks while synthesizing
with zero DSPs. These tests establish functional correspondence between the
trained/exported model and the implemented datapath. They do not constitute
physical board-level measurements of energy or end-to-end application
latency.

\paragraph{Model storage example.}
For the dense implementation, the FP32 model contains approximately 0.51M
parameters, or roughly 2.0\,MB when stored as 32-bit values. The
parity-augmented W1A1 model contains approximately 3.26M binary parameters,
corresponding to roughly 0.41\,MB for the 1-bit weights before auxiliary scale
and metadata storage. Thus the parity path can increase the number of learned
binary parameters while still retaining a much smaller raw weight footprint
than the FP32 representation.

\section{Complete Training Details and Reproduction}

This appendix records every setting needed to reproduce the reported numbers
from a clean checkout. Section~B fixed the statistical protocol; this section
fixes the training configuration that protocol was applied to. Unless a table
below states otherwise, every arm of every comparison uses identical values, so
the only variable between arms is the stated treatment.

The supplementary code release contains \texttt{README.md} (per-experiment
commands), \texttt{REPRODUCE.md} (the full campaign, queue files, and analysis
pipeline), and \texttt{PREREGISTRATION.md} (the pre-committed analysis plan).
Every command quoted here is reproduced verbatim in those files.

\subsection{Software and Hardware Environment}

Training used PyTorch~$\geq$~2.0 with torchvision and \texttt{einops} on Python
$\geq$~3.9; the campaign itself ran on CUDA~12.1 wheels. Experiments were
executed on two accelerators sharing one filesystem: a higher-throughput node
(16 CPU cores, \texttt{BKAN\_NUM\_WORKERS=8}, \texttt{OMP\_NUM\_THREADS=4}) and a
second node (8 CPU cores, \texttt{BKAN\_NUM\_WORKERS=4},
\texttt{OMP\_NUM\_THREADS=2}). Dataloader worker counts were held constant per
machine for the entire campaign, because changing them changes the input
pipeline and therefore the comparison.

We do not claim bitwise determinism. GPU floating-point reductions vary with
device, driver version, and cuDNN algorithm selection, so the protocol reports
mean~$\pm$~standard deviation across seeds rather than exact replay. The
launcher exposes a \texttt{--deterministic} flag that additionally requests
deterministic torch algorithms; it is slower and was not used for the reported
numbers. Seeding is applied before any module is constructed, so both weight
initialization and dataloader shuffling are seed-controlled.

Binarized layers deserve a specific caveat here, because they are far more
sensitive to floating-point reassociation than full-precision ones. A change of
one unit in the last place in a pre-activation that happens to lie on a
$\operatorname{sign}$ or rounding boundary flips a discrete decision, so the
layer output moves by $O(1)$ rather than by an ULP. We measured this directly
with the release's bitwise characterization suite: comparing single-forward and
backward snapshots of all 67 covered primitives recorded under two different
PyTorch releases, 56 agreed to within $1.5\times10^{-5}$, while 11 --- every one
of them a binarized or quantized path --- diverged visibly. This is a property
of binarization at thresholds, not a defect, and it does not affect the reported
accuracies, which are averaged over seeds and full training runs rather than
over single forward passes. It does mean that a bitwise characterization
baseline must be re-recorded after a toolchain change rather than carried
across versions.

The FPGA results in Appendix~K additionally require Xilinx Vivado HLS 2019.1.
The code release contains the complete hardware track: the exporters that fold
batch normalization, pack bits and emit the weight and threshold ROMs
(\texttt{bkan/export/}); the generated HLS kernels, self-checking testbenches
and build scripts for all nine designs (\texttt{hls/}); and the export
provenance for each design (\texttt{hw\_data/<design>/manifest.json}, recording
geometry, packing convention, fixed-point scales, source-checkpoint hash, and
replay-versus-PyTorch agreement).

Correctness is verifiable without any Xilinx tooling. The kernels also compile
with a plain C++11 compiler, because the headers fall back to fixed-width
integer types when the arbitrary-precision types are unavailable, and a small
design ships with its test vectors so the export, generation and simulation
chain runs end to end in minutes. Resource and latency figures are the only
part that needs the vendor flow; the measured values are retained in the
release as \texttt{hls/RESULTS.json} so the hardware table can be audited
without rerunning synthesis.

\subsection{Architectures and Parameter Counts}

The FP32 teacher is a degree-3 Gram-polynomial convolutional KAGN at channel
widths $(64,128,256)$. The \bikan{} student uses \emph{the same widths}: this is
the width-free claim, and no student in the primary experiments is wider than
its teacher. The shared macro-architecture is three convolutional blocks with
max-pooling after the first two, adaptive average pooling, and a dense
classification head, with hardtanh between blocks and three intermediate feature
taps used for distillation. Group replication before the shifted quantizer uses
$\text{groups}=8$; the deployed parity offset set is $\mathcal{R}=\{1,3\}$ with
circulant pairing.

\begin{table}[t]
\centering
\scriptsize
\setlength{\tabcolsep}{4pt}
\begin{tabular}{@{}llr@{}}
\toprule
Model & Widths & Parameters \\
\midrule
FP32 KAGN teacher            & $(64,128,256)$   & 1.86\,M \\
\bikan{} student (full)      & $(64,128,256)$   & 11.94\,M \\
Student without parity path  & $(64,128,256)$   & 5.97\,M \\
Bare baseline                & $(64,128,256)$   & 5.97\,M \\
Bare baseline, $4\times$ wide & $(256,512,1024)$ & 94.67\,M \\
Parameter-matched bare       & $(90,180,360)$   & 11.77\,M \\
\bottomrule
\end{tabular}
\caption{\textbf{Model configurations and parameter counts.}
The $4\times$ widths are the conventional hyper-widened binary configuration.
The parameter-matched bare widths are solved numerically by
\texttt{hpc/solve\_bare\_widths.py} and cached so that all five seeds use
identical widths.}
\label{tab:archparams}
\end{table}

The parity path roughly doubles the student's binary parameter count while
adding no hidden channels and no multiply--accumulate operations: each parity
plane is one XNOR per channel fed by a fixed channel rotation, which is pure
wiring in hardware. Appendix~K quantifies the resulting storage footprint.

\subsection{Optimization}

All arms share one optimizer configuration. The only deliberate asymmetry is
the treatment of latent binary weights, which receive zero weight decay and a
boosted learning rate. Decay pulls latent weights toward zero and induces
chronic sign-flip churn late in training, and only the sign is deployed, so
decay buys nothing.

\begin{table}[t]
\centering
\scriptsize
\setlength{\tabcolsep}{4pt}
\begin{tabular}{@{}ll@{}}
\toprule
Setting & Value \\
\midrule
Optimizer                     & AdamW \\
Base learning rate            & $1\times10^{-3}$ \\
Schedule                      & cosine annealing to zero, $T_{\max}=$ stage epochs \\
Latent binary weights         & weight decay $0$, lr $\times\,2.0$ \\
All other parameters          & weight decay $1\times10^{-4}$, lr $\times\,1$ \\
Batch size                    & 128 \\
Teacher optimizer             & AdamW, lr $1\times10^{-3}$, wd $1\times10^{-4}$, cosine \\
Teacher objective             & cross-entropy \\
\bottomrule
\end{tabular}
\caption{\textbf{Optimization settings, identical across every arm.}}
\label{tab:optim}
\end{table}

\subsection{Distillation Objective}

Each student stage minimizes
\[
\mathcal{L}
=
\mathcal{L}_{\mathrm{KD}}
+
\lambda_{\mathrm{AT}}\,\mathcal{L}_{\mathrm{AT}}
+
\lambda_{\mathrm{div}}\,\mathcal{L}_{\mathrm{div}}
+
\lambda_{\mathrm{EMA}}\,\mathcal{L}_{\mathrm{EMA}},
\]
where
$\mathcal{L}_{\mathrm{KD}}
= \alpha T^{2}\,\mathrm{KL}(\sigma(z_s/T)\,\|\,\sigma(z_t/T))
+ (1-\alpha)\,\mathrm{CE}(z_s,y)$
with $T=4.0$ and $\alpha=0.9$.

$\mathcal{L}_{\mathrm{AT}}$ is the scale-free activation-attention criterion of
Zagoruyko and Komodakis applied to the three feature taps: channel-summed
squared activations, $L_2$-normalized over the flattened spatial grid, compared
by mean squared error. Because the student is at teacher width, student and
teacher feature maps match in both channel count and spatial extent, so
\emph{no learnable projectors are required}. This is a direct consequence of
not widening, and it is why the objective is simpler than the two-stage widened
pipelines it replaces.

$\mathcal{L}_{\mathrm{div}}$ is a hinge penalty keeping the grouped sign
thresholds staggered rather than collapsing onto one another.
$\mathcal{L}_{\mathrm{EMA}}$ distills from an exponential-moving-average copy of
the student itself, and is active only in the final W1A1 stage after a warmup.

\begin{table}[t]
\centering
\scriptsize
\setlength{\tabcolsep}{4pt}
\begin{tabular}{@{}llr@{}}
\toprule
Term & Symbol & Value \\
\midrule
KD temperature                 & $T$                       & 4.0 \\
KD blend                       & $\alpha$                  & 0.9 \\
Attention transfer             & $\lambda_{\mathrm{AT}}$   & 1000.0 \\
Shift diversity                & $\lambda_{\mathrm{div}}$  & 0.1 \\
EMA self-teacher               & $\lambda_{\mathrm{EMA}}$  & 0.3 \\
EMA decay                      & --                        & 0.999 \\
EMA warmup (fraction of stage) & --                        & 0.5 \\
\bottomrule
\end{tabular}
\caption{\textbf{Distillation hyperparameters.}
The EMA term is applied in the final W1A1 stage only; the Mode-B dose-response
arms disable it (\texttt{lambda\_ema=0}) so that $|\mathcal{R}|$ is the sole
variable.}
\label{tab:kd}
\end{table}

The large $\lambda_{\mathrm{AT}}$ reflects the scale of the normalized attention
criterion, whose per-element magnitudes are several orders below the logit loss;
it is not a strong-supervision setting. These four auxiliary terms are inherited
from established binary-network practice rather than tuned here, and
Appendix~F's recipe ablation shows that at this budget none of them measurably
improves the W1A1 result: every single-term removal lands within $0.4$ points of
the full recipe, and several are marginally \emph{positive}. They are reported
as second-order settings, not as contributions.

\subsection{Progressive Descent and Post-Training}

The headline recipe steps the activation bit-width through
$8\rightarrow4\rightarrow2\rightarrow1$. Architecture and parameter shapes are
unchanged across stages, so each stage inherits the previous stage's weights
directly. Within each stage the IR-Net error-decay-estimator temperature is
annealed logarithmically from $0.1$ to $10.0$, giving permissive gradients early
and accurate sign gradients late. The anneal restarts per stage.

Two post-training steps precede evaluation. RPReLU slopes are snapped to signed
powers of two, turning the negative-half multiply into a barrel shift for a
multiplier-free implementation. BatchNorm statistics are then re-estimated with
frozen weights over 100 training batches using cumulative averaging. BatchNorm
recalibration matters more for binary networks than is usual, because BatchNorm
effectively sets the sign thresholds and folds into the integer comparison
threshold on hardware; the primary reported metric is measured after this step.

\begin{table}[t]
\centering
\scriptsize
\setlength{\tabcolsep}{4pt}
\begin{tabular}{@{}lrrr@{}}
\toprule
Dataset & Teacher epochs & Epochs/stage & Ablation epochs \\
\midrule
MNIST         & 20 & 20 & 30 \\
CIFAR-10      & 50 & 40 & 40 \\
CIFAR-100     & 60 & 50 & -- \\
Tiny-ImageNet & 60 & 50 & -- \\
\bottomrule
\end{tabular}
\caption{\textbf{Epoch budgets.}
A CIFAR-10 progressive run therefore trains the student for 160 epochs in total
across four stages, which is exactly the budget the compute-matched
\texttt{direct160} control allocates to a single W1A1 stage.}
\label{tab:epochs}
\end{table}

This budget accounting is the reason Appendix~F reports two progressive-versus-
direct comparisons rather than one. At equal \emph{final-stage} budget the
progressive schedule wins by $1.87$ points; at equal \emph{total} optimization
compute the advantage disappears. Both are reported.

\subsection{Data Pipeline and Splits}


\begin{table}[t]
\centering
\scriptsize
\setlength{\tabcolsep}{2.5pt}
\renewcommand{\arraystretch}{1.08}
\begin{tabular}{@{}lll@{}}
\toprule
\textbf{Dataset} & \textbf{Train aug.} & \textbf{Normalization} \\
\midrule
MNIST
& None
& $\mu=(0.1307)$,\quad $\sigma=(0.3081)$ \\

CIFAR-10
& Crop+flip
& \begin{tabular}[c]{@{}l@{}}
  $\mu=(0.4914,0.4822,0.4465)$\\
  $\sigma=(0.2023,0.1994,0.2010)$
  \end{tabular} \\

CIFAR-100
& Crop+flip
& \begin{tabular}[c]{@{}l@{}}
  $\mu=(0.5071,0.4865,0.4409)$\\
  $\sigma=(0.2673,0.2564,0.2762)$
  \end{tabular} \\
\bottomrule
\end{tabular}
\caption{\textbf{Image preprocessing.}
For CIFAR, training uses a $32{\times}32$ random crop with four-pixel padding
and horizontal flipping. Validation and test splits use only the evaluation
transform, without augmentation.}
\label{tab:data}
\end{table}

The validation split is carved from the training split with a \emph{fixed} split
seed of $20260716$, deliberately independent of the experiment seed. Every arm
at every seed therefore sees the identical validation set, so checkpoint
selection cannot introduce a between-arm difference. Primary experiments use
$\texttt{val\_fraction}=0.1$. Tiny-ImageNet uses the $\texttt{val\_fraction}=0$
fallback and selects the final checkpoint instead; as stated in Appendix~B it is
treated as supporting evidence rather than a primary statistical result.

\subsection{Secondary Suites}

The cross-family and tabular studies were conducted under the earlier
single-seed protocol with selection on test, and are reported as exploratory.
Their settings are recorded here for completeness.

\paragraph{Cross-family width-free students (Appendix~J).}
One dense width-free student per KAN family, trained on MNIST at that family's
teacher geometry, distilled from that family's own FP32 teacher with
logits-only KD (dense networks have no spatial attention maps). Teacher: 15
epochs; student: 40 epochs; $\text{groups}=4$; $\mathcal{R}=\{1,3\}$; batch
size~128; lr $1\times10^{-3}$; latent lr multiplier $2.0$; weight decay
$1\times10^{-4}$; $\alpha=0.9$; $T=4.0$;
$\lambda_{\mathrm{div}}=0.1$; BatchNorm recalibration over 100 batches. Student
dimensions are $(784,64,10)$ for the EfficientKAN and PyKAN teachers and
$(784,128,10)$ for the FastKAN teacher.

\paragraph{Tabular and MLP suite (Appendix~I).}
Four arms per dataset under identical budgets: the FP32 KAN teacher; the
width-free \bikan{} student at the teacher's exact layer dimensions; a
\emph{widened} arm using the bare configuration with hidden dimensions scaled by
$4\times$; and a non-KAN arm, a plain binarized MLP at teacher dimensions with
the basis path and parity path disabled and $\text{groups}=1$. Teacher: 60
epochs; binary arms: 80 epochs; batch size~128; lr $1\times10^{-3}$; latent lr
multiplier $2.0$; weight decay $1\times10^{-4}$; $\alpha=0.9$; $T=4.0$;
$\lambda_{\mathrm{div}}=0.1$. Group replication is $4$ for the tabular datasets
and $2$ for the high-dimensional MLP row, whose 12{,}288-dimensional input would
otherwise make the replicated width impractical. The traffic dataset is a
multi-horizon regression task ($72\rightarrow96$) scored by RMSE, with an MSE
distillation term to the teacher's outputs, and additionally trains a
full-precision non-KAN MLP as a reference.

\paragraph{Synthetic Walsh tasks (Appendix~D).}
$n=64$ Boolean coordinates, seven disjoint circular pairs, seeds $0$--$2$,
20{,}000 training and 5{,}000 test examples, 40 epochs, learning rate
$5\times10^{-3}$, batch size~256. The covered regime draws pair distances from
$\mathcal{R}=\{1,3\}$; the uncovered regime draws them from
$\{5,7,11\}$, disjoint from $\mathcal{R}$. Sign-MLP baselines are evaluated at
widths $\{16,64,256,1024\}$ with one and two binary hidden layers. These runs
are CPU-only and complete in minutes.

\subsection{Teacher Sharing and Command Reproduction}

Every arm compared against another at seed $s$ loads \emph{the same} FP32
teacher checkpoint, staged into the run directory before launch. This is not an
optimization: teacher accuracy itself varies across seeds
($83.11\pm0.34$ on CIFAR-10), and pairing removes that variance from the
comparison rather than propagating it. If the cached checkpoint is absent, the
pipeline silently trains a fresh teacher, the arms no longer share one, and the
paired statistics become invalid without raising an error. The queue files
encode the dependency explicitly so that a worker defers a job whose teacher
does not yet exist.

Table~\ref{tab:cmdmap} maps each reported cell to the override that produces it.
All runs share the launcher
\texttt{python run\_seeded.py -e <experiment> --seed <s> --name <tag>
--set val\_fraction=0.1 --teacher <cached teacher>}; only the overrides differ.

\begin{table*}[t]
\centering
\small
\setlength{\tabcolsep}{3pt}
\begin{tabular}{@{}lll@{}}
\toprule
Tag & Experiment & Distinguishing override \\
\midrule
\texttt{abl\_full}      & ablation & \texttt{arms=['full']} \\
\texttt{abl\_no\_parity} & ablation & \texttt{arms=['no\_parity']} \\
\texttt{abl\_bare}      & ablation & \texttt{arms=['bare']} \\
\texttt{prog\_full}     & progressive & (defaults) \\
\texttt{direct40}       & progressive & \texttt{a\_bits\_schedule=(1,)} \\
\texttt{direct160}      & progressive & \texttt{a\_bits\_schedule=(1,)},
                                        \texttt{epochs\_per\_stage=160} \\
\texttt{widbare4}       & width sweep & \texttt{width\_mults=()},
                                        \texttt{include\_bare\_wide=True} \\
\texttt{widfull\_x2}    & width sweep & \texttt{width\_mults=(2.0,)} \\
\texttt{par\_r$k$}      & progressive & \texttt{lambda\_ema=0},
                                        \texttt{parity\_rolls} with $|\mathcal{R}|=k$ \\
\texttt{par\_rand}      & progressive & \texttt{parity\_pairing=random} \\
\texttt{barematch}      & progressive & \texttt{widths=(90,180,360)}, bare config \\
\texttt{w025\_par}      & progressive & \texttt{widths=(16,32,64)} \\
\texttt{w05\_par}       & progressive & \texttt{widths=(32,64,128)} \\
\texttt{rec\_noat}      & progressive & \texttt{lambda\_at=0} \\
\texttt{rec\_noema}     & progressive & \texttt{lambda\_ema=0} \\
\texttt{rec\_nodiv}     & progressive & \texttt{lambda\_div=0} \\
\texttt{rec\_nolat}     & progressive & \texttt{latent\_lr\_mult=1.0} \\
\texttt{rec\_noede}     & progressive & \texttt{estimator=analytic} \\
\bottomrule
\end{tabular}
\caption{\textbf{Run tag to command mapping.}
The dose-response arms additionally set \texttt{a\_bits\_schedule=(1,)} so that
the parity count is isolated from the descent schedule. The complete queue
files, including seeds and teacher dependencies, ship with the code.}
\label{tab:cmdmap}
\end{table*}

\subsection{Artifacts and Analysis}

Every run writes its record incrementally, so the on-disk artifact is complete
up to the last finished epoch even if the job is killed. Each run directory
contains the seed and override record, the cached teacher, the final student
checkpoint, a JSON document with configuration, environment, per-stage results
and parameter counts, and a per-epoch CSV of training loss, test accuracy,
validation accuracy, learning rate, and EDE temperature. Re-running an
experiment never overwrites a previous log; the earlier file is moved aside
under a timestamp.

The primary reported quantity is the BatchNorm-recalibrated test accuracy of the
validation-selected checkpoint. Best-on-test values are also recorded and appear
only where explicitly labeled. Aggregation and the paired $t$-tests of
Appendix~C are produced by a single analysis script over the run directory; the
exact invocations that generated every reported table ship as a script in
\texttt{REPRODUCE.md}, and their outputs are included with the code as
\texttt{results/01\_aggregate\_all.txt} through
\texttt{results/04\_all\_runs.csv}. A reviewer can therefore compare a
replication against the released tables cell by cell without rerunning the
analysis.

The full campaign is 117 jobs and approximately 160--265 GPU-hours. For partial
verification, the release documents three cheaper paths: the synthetic Walsh
tasks reproduce Proposition~1 and its circulant limitation in about ten minutes
on CPU; a single-seed parity ablation reproduces H1's direction in roughly four
GPU-hours; and a three-seed subset of the headline, ablation, and widened arms
reproduces H1, H3, and H4a at reduced statistical power in roughly thirty
GPU-hours.

\end{document}